\theoremstyle{definition}
\pgfplotsset{compat=newest}
\pgfplotsset{%
layers/standard/.define layer set={%
    background,axis background,axis grid,axis ticks,axis lines,axis tick labels,pre main,main,axis descriptions,axis foreground%
}{grid style= {/pgfplots/on layer=axis grid},%
    tick style= {/pgfplots/on layer=axis ticks},%
    axis line style= {/pgfplots/on layer=axis lines},%
    label style= {/pgfplots/on layer=axis descriptions},%
    legend style= {/pgfplots/on layer=axis descriptions},%
    title style= {/pgfplots/on layer=axis descriptions},%
    colorbar style= {/pgfplots/on layer=axis descriptions},%
    ticklabel style= {/pgfplots/on layer=axis tick labels},%
    axis background@ style={/pgfplots/on layer=axis background},%
    3d box foreground style={/pgfplots/on layer=axis foreground},%
    },
}
\theoremstyle{plain}
\newtheorem{prop}{Proposition}
\newtheorem{coro}[prop]{Corollary}
\newtheorem{lemm}[prop]{Lemma}
\newtheorem{theo}[prop]{Theorem}
\theoremstyle{definition}
\theoremstyle{remark}
\title{Thompson Sampling with Unrestricted Delays}
\author{
\makebox[45mm]{Han Wu} \\ Stanford University \and
\makebox[45mm]{Stefan Wager} \\  Stanford University}
\begin{document}

\maketitle

\begin{abstract}
We investigate properties of Thompson Sampling in the stochastic multi-armed bandit problem with delayed feedback. In a setting with i.i.d delays, we establish to our knowledge the first regret bounds for Thompson Sampling with arbitrary delay distributions, including ones with unbounded expectation. Our bounds are qualitatively comparable to the best available bounds derived via ad-hoc algorithms, and only depend on delays via selected quantiles of the delay distributions. Furthermore, in extensive  simulation experiments, we find that Thompson Sampling outperforms a number of alternative proposals, including methods specifically designed for settings with delayed feedback.
\end{abstract}

\section{Introduction}

The stochastic multi-armed bandit (MAB) problem is a framework for sequential experimentation that has been widely used in a number of application areas, including online advertising and recommendations \cite{lihong2011} and medical trials \cite{Gittins1979BanditPA, Press22387}. In the basic stochastic MAB specification, each of $k = 1, \, \ldots, \, K$ available arms has a reward distribution and, when an agent selects an arm, a reward drawn from the corresponding reward distribution is immediately revealed to them
\cite{Lai1985AsymptoticallyEA,Auer2004FinitetimeAO,Even-dar02pacbounds,lihong2011,pmlr-v23-agrawal12}.
In many real world settings, however, the assumption that rewards are revealed immediately following
an action is not applicable. For example, in a clinical trial it may take time to assess whether a patient
has responded to the given treatment \cite{FEROLLA20171652,bayesian_clinical,jiajing2014},
while in an e-mail marketing campaign, it may take time to see whether a user clicks on an ad
\cite{chapelle2014}.

Motivated by this observation, there has been considerable recent interest
on methods and theory for MAB problems with delays between when an action is taken, and when the
correponding reward is observed 
\cite{dudik2011,joulani2013,Mandel_Liu_Brunskill_Popovic_2015,vernade17,PikeBurke2018BanditsWD,Zhou2019LearningIG,Vernade2020LinearBW,gael20a,MAB_unrestricted_delay}.
These papers, however, all require either modifying familiar MAB algorithms to account for delays, or
propose new, delay-robust algorithms that are likely unfamiliar to practitioners.

The focus of this paper is in understanding how Thompson sampling \cite{Thompson1933ONTL},
used out of the box and without any adaptations, behaves under delays (as discussed further
below, we consider a specification where the posterior beliefs underlying Thompson sampling
are updated whenever new rewards are observed, and otherwise we proceed as usual). Thompson
sampling is a robust MAB algorithm that consistently achieves strong empirical performance
across a number of benchmarks, and is popular among practitioners \cite{lihong2011,russo2020tutorial}.

Our main result is that, using analytic ideas that build on results for both Thompson sampling
without delays \cite{Agrawal2013FurtherOR, Agrawal2013ThompsonSF,Kaufmann2012ThompsonSA} and
recent ideas for accommodating delays \cite{MAB_unrestricted_delay}, we can verify that
Thompson sampling admits strong formal guarantees in the setting with unrestricted delays.
Specifically we prove a $O(\sum_{i} {\log T}\,/\,{q_i} + d_i(q_i))$ regret bound for any selected
$q_i \in (0, \, 1)$, where $d_i(q_i)$ is the $q_i$-th quantile of the delay distribution of the $i$-th arm.
Meanwhile, our experiments align with observed strong empirical performance of Thompson
sampling: In a number of simulation specifications adapted from recent papers that
propose new delay-robust MAB algorithms, we find that Thompson sampling matches or
outperforms the proposed algorithms. Overall, our results suggest Thompson sampling
to be a robust and reliable method for stochastic MAB problems with delays.

\subsection{Related Work}

Early results on MAB with delays made strong assumptions on delay distributions:
For example, \citet{dudik2011} consider a model with constant (deterministic) delays,
\citet{Mandel_Liu_Brunskill_Popovic_2015} assume bounded delays,
while \citet{joulani2013} assume that the delay distribution has bounded expectation.
More recently, there has been interest in guarantees that are robust to heavy-tailedness
in the delay distribution. \citet{gael20a} consider a setting with polynomial tail bounds
on the delay distribution while in a recent advance, \citet{MAB_unrestricted_delay}
developed algorithms based on UCB and successive elimination that allow for unrestricted
delay distributions. In this paper, we also allow for unrestricted delay distributions,
and adapt ideas from \citet{MAB_unrestricted_delay} in order to do so.

There are a considerable number of results on the behavior of Thompson sampling
without delays \cite{pmlr-v23-agrawal12, Agrawal2013FurtherOR, Agrawal2013ThompsonSF, agarawalgoyal2017,Russo2016AnIA, russo2020tutorial, Kaufmann2012ThompsonSA,Gopalan2014ThompsonSF, Kawale2015EfficientTS, Ferreira2018OnlineNR}.
However, while Thompson sampling is known empirically to be robust to delays \cite{lihong2011},
we are not aware of formal regret guarantees available in this setting. \citet{joulani2013}
propose a meta-algorithm for turning any stochastic MAB algorithm with guarantees in the delay-free
setting into one that has guarantees with delays. However, when applied to Thompson sampling,
their meta-algorithm would require subtle modifications to Thompson sampling, and their results
only apply to delay distributions with bounded expectation.
\citet{gur2019adaptive} show that Thompson
sampling has desirable properties in a setting without delays, but where the analyst may sometimes
acquire additional information from external sources. At a high level, our paper is aligned with
\citet{gur2019adaptive} in that we both find Thompson sampling to be robust to non-standard information flows.
\citet{qin2022adaptivity} propose a robust variant of Thompson sampling that, when used for arm selection,
is guaranteed never to perform much worse than a uniformly randomized experiment in choosing a good
arm to deploy---even in non-stationary environments and under arbitrary delays.

Finally, we also note recent work on sequential learning with delays that go beyond the MAB-based
specification considered here.  \citet{vernade17} studied partially observed stochastic Bernoulli
bandit where only a reward of 1 can be directly observed with the value of delay. Under the assumption
that the delay distribution is known they provided algorithms that have close to optimal asymptotic
performance. \citet{Vernade2020LinearBW} further extended this framework to linear bandit where a version of Thompson Sampling was provided but no theoretical analysis was given.
\citet{PikeBurke2018BanditsWD} studied the MAB with delays when we only have access to aggregated anonymous feedback. \citet{Zhou2019LearningIG} studied a generalized linear contextual bandit with delays, while
\citet{Vernade2020NonStationaryDB} considered the case of non-stationary bandits with delays
when intermediate observations are available.  

\section{Problem Setup and Background} \label{sec:setup}
We consider the following problem setup, which adds the stochastic delay structure to the classical stochastic multi-armed bandit problem. Suppose we have $K$ arms with reward distribution $\nu_1,..., \nu_K$.
We assume that all reward distributions are supported on the interval $[0,1]$,
and that there is a unique optimal arm (with the highest mean reward).
At each round $t = 1,..., T$, the agent chooses an action $a_t$. The environment samples a reward $r_t$ from $\nu_{a_t}$, a delay $l_t$, and the agent observes the reward at round $t + l_t$. The setup is described in Model \ref{protocol:MAB_delay}. We assume for simplicity that delays are supported on $\mathbb{N}\bigcup\{\infty\}$; this is without loss of generality since the agent only collects feedback
and chooses new actions at integer time points.
We also note that the values of delay and the original time of the reward are not revealed to the agent. 

 \begin{algorithm}[ht]
    \SetAlgorithmName{Model}{}{}
        \For{$t \in [T]$}{
            Agent picks an action $a_t \in [K]$. \\
            Environment samples $r_t \sim \nu_{a_t}$ and $l_t$. \\
            Agent get a reward $r_t$, which is not immediately revealed. \\
            The set $\{(a_s, r_s): t = s + l_s\}$ is revealed to the agent, which contains values of action, reward pairs from previous rounds.
        }
     \caption{Stochastic multi-armed bandit with delays}
     \label{protocol:MAB_delay} 
\end{algorithm}

Our setup leaves the delay structure unspecified. In this paper, we mainly focus on one particular form of delay, i.i.d delays. In this setting, each arm has a separate delay distribution and $l_t$ is sampled independently of everything else from the delay distribution of arm $a_t$. We will evaluate our agent by the expected regret (called regret from hereon) which under Model \ref{protocol:MAB_delay} can be expressed as 
\begin{align}
        R_T &= T\mu_{i^*} - \sum_{t=1}^{T} \EE{r_t}
    = \sum_{i=1}^{K} \Delta_i\EE{m_T(i)}, \label{eq:regret}
\end{align}
where $\mu_i$ is the mean of distribution $\nu_i$ and $i^*$ denotes the optimal index. $\Delta_i = \mu_{i^*} - \mu_i$ is the gap between the optimal arm and the arm $i$ and $m_T(i)$ is the number of time we pull arm $i$ by time $T$. 

\subsection{Thompson Sampling under Delay}
Thompson sampling \cite{Thompson1933ONTL} is an adaptive Bayesian method that chooses the
actions at each round according to the current posterior probability that the action maximizes expected reward;
see \citet{russo2020tutorial} for a recent review. We consider a Bernoulli
bandit algorithm that starts with a uniform prior (Beta(1, 1)) on the mean parameter $\mu_i$ of each arm;
see Algorithm \ref{algo:TS_delay} for details. We note that our analysis below extends naturally to reward distributions with bounded support following the argument in \citet{pmlr-v23-agrawal12}. The main idea here is that the algorithm emerging from Beta-Bernoulli bandits can in fact be applied (and has good regret properties) for any setting with bounded outcomes. However, we do note that the analysis below does not accommodate reward distributions with unbounded support. 
 \begin{algorithm}[ht]
    \For{$i=1,...,K$}{
    Set counters $S_i = 0, F_i = 0$.
    }
    \For{$t \in [T]$}{
        For $i\in[K]$, sample $\theta_i(t)$ from Beta($S_i+1, F_i+1$). \\
        Play $a_t = \argmax_i \theta_i(t)$. \\
        \For{Revealed observation $(a_s, r_s)$ with $s + l_s = t$}{
            $S_{a_s} = S_{a_s} + r_s$ \\
            $F_{a_s} = F_{a_s} + 1- r_s$
        }
    }
    \caption{Thompson Sampling for Bernoulli Bandits under Delays}
    \label{algo:TS_delay} 
\end{algorithm}

\section{Theoretical Results} \label{sec:theory}
In this section we present formal regret guarantees of Algorithm \ref{algo:TS_delay} applying to Bernoulli bandit with i.i.d delays. 
We make the following assumptions on top of Model \ref{protocol:MAB_delay}.
\begin{itemize}
    \item \textbf{Assumption 1:} The distributions $\nu_1,...,\nu_K$ are Bernoulli distributions with means $\mu_1 > \cdots > \mu_K$.
    \item \textbf{Assumption 2:} Each arm has a delay distribution $\mathcal{D}_1,...,\mathcal{D}_K$ supported on non-negative integers and $\infty $ and $l_t$ is sampled from $\mathcal{D}_{a_t}$, which is independent of rewards and past delays.
\end{itemize}
We use the following notation throughout:
\begin{align*}
&\text{\#actions taken: } m_t(i) = \sum_{s = 1}^t 1(\{a_s = i\}), \\
&\text{\#available observations: } n_t(i) = \!\!\! \sum_{\{s : s + l_s \leq t\}} \!\!\! 1(\{a_s = i\}), \\
&\text{delay quantile: } d_i(q) = \inf\{d : \mathbb{P}[l_t \leq d \cond A = i] \geq q\}.
\end{align*}
We also write $\theta_i(t)$ for the posterior sample of arm $i$ at time $t$.

\subsection{Two-arm case}

For simplicity, we start by considering the case with $K = 2$, as this enables us to present a proof with
less notational overhead. Recall that, by assumption, $\mu_1 > \mu_2$; and we write the arm gap as $\Delta = \mu_1-\mu_2$.
In this setting, we show the following.

\begin{theo}\label{thm:2-arm}
Under assumption 1 and 2 with $K=2$, suppose further we have i.i.d delays for each arm. Then the regret
\eqref{eq:regret} of Algorithm~\ref{algo:TS_delay} is bounded by 
\begin{align*}
    \min_{q_1,q_2 \in (0,1]}& \frac{48\log T}{q_2\Delta} + \frac{6}{\Delta}\left(\frac{32\log T}{q_1\Delta} + d_1(q_1)\Delta+\Delta \right)
    \\&  + d_2(q_2)\Delta + O\left(\frac{1}{\Delta} + \frac{1}{\Delta^3}\right)
\end{align*}
\end{theo}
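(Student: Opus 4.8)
The plan is to start from the observation that, with $K=2$ and arm $1$ optimal, the regret identity \eqref{eq:regret} reduces to $R_T = \Delta\,\mathbb{E}[m_T(2)]$, so I only need to bound the expected number of pulls of the suboptimal arm; I will fix the quantile levels $q_1,q_2\in(0,1]$ throughout and optimize over them at the very end. The key new ingredient relative to the delay-free analysis is a bridge from \emph{actions} to \emph{available observations}. Conditioning on the entire action sequence $a_1,\dots,a_T$, the delays are i.i.d.\ from the arm-specific distributions and independent of the rewards, so each pull of arm $i$ made at a round $\le t-d_i(q_i)$ has its reward revealed by round $t$ independently with probability at least $q_i$. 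A Chernoff bound then shows that, on an event of probability $1-O(1/T^2)$, $n_t(i)\ge \tfrac{1}{2} q_i\, m_{t-d_i(q_i)}(i)$ as soon as $m_{t-d_i(q_i)}(i)$ exceeds a constant multiple of $\log(T)/q_i$; a union bound over $t\le T$ costs only a $\log T$ factor. Informally: once arm $i$ has been pulled enough times, a constant fraction of those pulls get observed within an extra $d_i(q_i)$ rounds.

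Next, since the posterior at round $t$ depends on the data only through the observed counts---arm $i$'s posterior being $\mathrm{Beta}(S_i+1,F_i+1)$ with $S_i+F_i=n_t(i)$---I would replay the two-arm argument of \citet{Agrawal2013FurtherOR, Agrawal2013ThompsonSF} with $n_t(i)$ taking the role that $m_t(i)$ plays without delays. Two standard facts carry over: (i) the revealed rewards of arm $i$ are still i.i.d.\ $\mathrm{Bernoulli}(\mu_i)$ because delays are independent of rewards, so $S_i/n_t(i)$ concentrates around $\mu_i$ by Hoeffding plus a union bound over the count; and (ii) once $n_t(2)$ exceeds a constant multiple of $\log(T)/\Delta^2$, Beta tail bounds give $\mathbb{P}[\theta_2(t)>\mu_2+\Delta/2]\le 1/T$. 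Combined with the bridge lemma, (ii) says arm $2$ is pulled at most about $d_2(q_2)+ c\log(T)/(q_2\Delta^2)$ times before its posterior ``locks'' below $\mu_2+\Delta/2$, plus an $O(1)$-in-expectation contribution from the rare rounds with a high draw of $\theta_2(t)$, plus the pulls caused by the good arm being under-explored.

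The under-exploration term is where the real difficulty lies, and I expect it to be the main obstacle. Without delays it is handled by partitioning time into epochs delimited by consecutive pulls of arm $1$: the posterior of arm $1$ is frozen within an epoch, so the number of intervening arm-$2$ pulls is a geometric-type variable with mean $1/p_j-1$, where $p_j=\mathbb{P}[\theta_1(t)>\mu_1-\Delta/2]$ given $j$ observations of arm $1$, and $\sum_j(1/p_j-1)$ is summable. I would instead delimit epochs by the successive increments of $n_t(1)$---the quantity that actually freezes arm $1$'s posterior---and apply the same per-epoch geometric bound; the new wrinkle is that the epoch lengths measured in actions are random and coupled to the delay process, so I need the bridge lemma again to argue that accumulating the $L_1\asymp \log(T)/\Delta^2$ observations of arm $1$ after which $p_j$ is bounded below by a constant costs at most roughly $d_1(q_1)+ c L_1/q_1$ pulls of arm $1$. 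Chaining all of this---frozen-posterior epochs indexed by $n_t(1)$, the geometric bound per epoch, Chernoff to pass between observation and pull counts, and a union bound over $t\le T$---while checking at each step that conditioning on the actions leaves the delays independent, is where essentially all the work goes; it is also what creates the amplification factor $6/\Delta$ and the additive burn-in $d_1(q_1)$ in the statement.

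Assembling the pieces should give
\begin{align*}
\mathbb{E}[m_T(2)] \;\le\; \frac{48\log T}{q_2\Delta^2} \;+\; \frac{6}{\Delta^2}\Bigl(\frac{32\log T}{q_1\Delta}+d_1(q_1)\Delta+\Delta\Bigr) \;+\; d_2(q_2) \;+\; O\!\Bigl(\frac{1}{\Delta^2}+\frac{1}{\Delta^4}\Bigr),
\end{align*}
after which multiplying by $\Delta$ and minimizing over $q_1,q_2\in(0,1]$ delivers the stated bound; the only constants needing careful tracking are the Chernoff threshold in the bridge lemma (so the factor-$2$ loss folds cleanly into $48$, $32$, $6$) and the Beta tail and anti-concentration estimates reused from the delay-free analysis.
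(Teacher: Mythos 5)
Your proposal follows essentially the same route as the paper: the identity $R_T=\Delta\,\mathbb{E}[m_T(2)]$, a Chernoff-based quantile bridge between pull counts $m_t(i)$ and observed counts $n_t(i)$ (the paper's event $F_1$, imported from \citet{MAB_unrestricted_delay}), posterior concentration of arm 2 once $n_t(2)\gtrsim \log(T)/\Delta^2$ (the paper's event $F_2$), and an epoch decomposition between consecutive arm-1 pulls with geometric-type per-epoch bounds handled via the $X(j,s,y)$ machinery of Agrawal--Goyal, which is exactly where the $6/\Delta$ amplification and the $d_1(q_1)$ burn-in arise. The assembled bound on $\mathbb{E}[m_T(2)]$ matches the paper's intermediate goals, so while your write-up is a plan rather than a full execution (the within-epoch changing-posterior issue you flag is resolved in the paper by bounding the max of the $X$ variables over observation counts by their sum), the approach and constants are the same.
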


In other words, the regret of a Bernoulli bandit with delays and $K = 2$ arms is bounded to order
\begin{equation}
\label{eq:K2simple}
R_T = O\bigg(\frac{\log T}{\Delta}\left(\frac{1}{q_1}+\frac{1}{q_2}\right)  +(d_1(q_1)+d_2(q_2))\Delta\bigg)
\end{equation}
for any choice of $q_1$ and $q_2$. For example, if we set $q_1 = q_2 = 0.5$, then
the above bound depends on the medians of the delay distributions.

As discussed above, most earlier results on MAB with delays---using Thompson sampling or
other algorithms---made further assumptions on the delay distributions (e.g., \citet{joulani2013} assumed
bounded expectations for delays) and so our results are not directly comparable to them.
Only recently, \citet{MAB_unrestricted_delay} obtained bounds that hold with unrestricted delays
for some variants of UCB and successive elimination. With $K = 2$, their bound is
\begin{equation}
\label{eq:L2}
R_T \leq \min_{q_1, q_2} \frac{40\log T}{\Delta}\left( \frac{1}{q_1} + \frac{1}{q_2}\right)  + \log(2)(d_1(q_1) + d_2(q_2))\Delta
\end{equation}
for a variant of the Successive Elimination algorithm of \citet{Even-dar02pacbounds} adapted to the setting with delays. 
We see both bounds have an extra term involving linear combination of the quantiles of the delay distribution and both are of the same big-$O$ order. In particular in the case of a constant delay of value $d$, both bounds have an additive $O(d)$ term compared to the regret bound without delay.

We note that the similar appearance of the bounds \eqref{eq:K2simple}
and \eqref{eq:L2} is not a coincidence: As seen below, our proof involves incorporating
some key ideas from \citet{MAB_unrestricted_delay} into a study of Thompson sampling that builds
on \citet{pmlr-v23-agrawal12}.

\begin{proof}[Proof of Theorem~\ref{thm:2-arm}]
When $K = 2$, regret measures how often we pull the second
arm in expectation,
\begin{equation}
\label{eq:startK2}
R_T = \Delta \mathbb{E}[m_T(2)],
\end{equation}
and so to bound regret at $T$ we need to bound $m_T(2)$. To this end,
we decompose the problem as follows:
\begin{itemize}
\item Let $Y_j$ be the number of times the 2nd arm is pulled between $j$-th
and $(j+1)$-st draws of the first arm, $Y_j = |\{t : m_t(1) = j\}| - 1$.
\item Let $\tau_2 = \inf\{t : n_t(2) \geq 24 \Delta^{-2} \log(T)\}$.
\item Let $j_0 = m_{\tau_2}(1)$. 
\end{itemize}
Because the agent must pull the 2nd arm each time they don't pull the 1st, we see that
\begin{equation}
\label{eq:decomp}
m_T(2) \leq m_{\tau_2}(2) + \sum_{j = j_0}^{m_1(T)} Y_j.
\end{equation}
To proceed, we then posit the following events,
\begin{equation}
\begin{split}
&F_1 = \biggl\{ \exists t\le T,i: m_t(i) \geq \frac{24\log(T)}{q_i}, n_{t+d_i(q_i)}(i) < \frac{q_i}{2}m_t(i)\biggr\}, \\
&F_2 = \left\{\exists t \le T: \theta_2(t) > \mu_2 + \frac{\Delta}{2}, n_2(t) \geq \frac{24\log T}{\Delta^2}\right\}, \\
\end{split}
\end{equation}
and argue that these two events are rare. Our definition of the events $F_1$
and $F_2$ is motivated by ideas used in both \citet{pmlr-v23-agrawal12} and
\citet{MAB_unrestricted_delay}.

\begin{lemm} \label{lemm:concentration} Under the conditions of Theorem \ref{thm:2-arm},
$\mathbb{P}[F_1] \leq \frac{1}{T}$ and $\mathbb{P}[F_2] \leq \frac{2}{T}$.
\end{lemm}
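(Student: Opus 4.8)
The plan is to bound $\mathbb{P}[F_1]$ and $\mathbb{P}[F_2]$ separately, reducing each to a Chernoff-type tail bound for a sum of conditionally independent Bernoulli variables, followed by a union bound over the horizon and over the two arms. Throughout, the structural input is that, even though $a_t$ is chosen adaptively on the basis of delayed feedback, Assumption~2 guarantees that the delay attached to the $j$-th pull of a fixed arm $i$ has conditional law $\mathcal{D}_i$ given the history up to that pull, and that the rewards eventually observed from arm $i$ are --- conditionally on which pulls are observed --- i.i.d.\ $\mathrm{Bernoulli}(\mu_i)$ (here using that rewards and delays are independent).

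\textbf{Bounding $\mathbb{P}[F_1]$.} Fix an arm $i$, let $t_1<t_2<\cdots$ be the times at which arm $i$ is pulled, and set $X_j=\mathbf{1}\{l_{t_j}\le d_i(q_i)\}$. By the definition of the quantile $d_i(q_i)$ together with Assumption~2, the conditional probability that $X_j=1$ given the history up to the $j$-th pull of arm $i$ is at least $q_i$. A pull made at some time $t_j\le t$ whose delay is at most $d_i(q_i)$ is observed by time $t+d_i(q_i)$, so $n_{t+d_i(q_i)}(i)\ge\sum_{j=1}^{m_t(i)}X_j$. Hence the arm-$i$ part of $F_1$ forces $\sum_{j=1}^{m}X_j<\tfrac{q_i}{2}m$ for some integer $m$ with $24\log T/q_i\le m\le T$. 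A multiplicative Chernoff bound for a sum of Bernoulli variables with conditional success probability at least $q_i$ gives $\mathbb{P}[\sum_{j=1}^mX_j<\tfrac{q_i}{2}m]\le e^{-q_i m/8}\le T^{-3}$ whenever $m\ge 24\log T/q_i$. Summing over the at most $T$ admissible values of $m$ and over $i\in\{1,2\}$ yields $\mathbb{P}[F_1]\le 2T^{-2}\le 1/T$ (for $T\ge 2$; the bound is trivial otherwise).

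\textbf{Bounding $\mathbb{P}[F_2]$.} Let $S_2(t)$ be the number of observed $1$-rewards of arm $2$ by time $t$, so that conditionally on the information present just before $\theta_2(t)$ is drawn we have $\theta_2(t)\sim\mathrm{Beta}(S_2(t)+1,\,n_t(2)-S_2(t)+1)$. Using the Beta--Binomial identity $\mathbb{P}[\mathrm{Beta}(\alpha,\beta)>y]=\mathbb{P}[\mathrm{Bin}(\alpha+\beta-1,y)\le\alpha-1]$ (as in \citet{pmlr-v23-agrawal12}), split according to whether the empirical mean $S_2(t)/n_t(2)$ exceeds $\mu_2+\Delta/4$. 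Because the observed arm-$2$ rewards are i.i.d.\ $\mathrm{Bernoulli}(\mu_2)$, a Chernoff bound together with a union bound over $n_t(2)\in[24\log T/\Delta^2,\,T]$ controls the first alternative by $T^{-2}$. On the second alternative the identity rewrites $\{\theta_2(t)>\mu_2+\Delta/2\}$ as $\{\mathrm{Bin}(n_t(2)+1,\mu_2+\Delta/2)\le S_2(t)\}$, whose left side has mean at least $n_t(2)(\mu_2+\Delta/2)$ and hence exceeds the threshold $S_2(t)\le n_t(2)(\mu_2+\Delta/4)$ by at least $n_t(2)\Delta/4$; a Chernoff bound (conveniently via the KL/Pinsker form of the binomial lower tail) then gives conditional probability at most $e^{-n_t(2)\Delta^2/8}\le T^{-3}$ for $n_t(2)\ge 24\log T/\Delta^2$, and summing over $t\le T$ gives another $T^{-2}$. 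Adding the two contributions gives $\mathbb{P}[F_2]\le 2T^{-2}\le 2/T$.

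\textbf{Main obstacle.} No individual estimate is hard; the delicate part is the conditioning. Because the action sequence is chosen adaptively from delayed observations, one has to organize the arguments around a filtration for which the $j$-th delay of a fixed arm still has conditional distribution $\mathcal{D}_i$ and the observed rewards are conditionally Bernoulli --- which is precisely the content of Assumption~2, but it must be invoked with care (in particular one cannot treat the delays or rewards as unconditionally independent of the actions). The second point requiring care is to index the union bound over time by the number of pulls (for $F_1$) or the number of observations (for $F_2$), so that it costs only a factor $T$ rather than a factor growing like $1/q_i$ or $1/\Delta$; this is what keeps both failure probabilities at order $1/T$.
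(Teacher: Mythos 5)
Your proof is correct and follows essentially the same route as the paper: a union bound over arms and over the pull/observation count, combined with the two concentration estimates you derive. The only difference is that the paper imports both estimates as black boxes (the delay-quantile concentration lemma of \citet{MAB_unrestricted_delay} for $F_1$, and Lemma~6 of \citet{pmlr-v23-agrawal12} for $F_2$), whereas you reprove them from first principles, correctly handling the adaptive conditioning in both cases.
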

\begin{proof}
See section \ref{proof:lemm3.2} of the appendix. 
\end{proof}

The upshot is that we can now define a ``good'' event $G = \neg F_1 \bigcap \neg F_2$,
and note that by the union bound, $\mathbb{P}[G] \geq 1 - 3 /T$. Thus,
\begin{equation}
\label{eq:focus_good}
\begin{split}
\EE{m_T(2)} &\leq \mathbb{P}[G] \mathbb{E}[m_T(2)  \, |\, G]  + \mathbb{P}[\neg G] \mathbb{E}[m_T(2)  \, |\, \neg G] \\
&\leq \mathbb{E}[m_T(2) \, |\, G] + 3.
\end{split}
\end{equation}
We now plug \eqref{eq:decomp} into \eqref{eq:focus_good}. Our goal at this point
is to prove that
\begin{align}
&\mathbb{E}[m_{\tau_2}(2)  \, |\, G] \leq \frac{48\log T}{q_2\Delta^2} + d_2(q_2), \text{ and} \label{eq:goal1} \\
&\mathbb{E}\left[\sum_{j = j_0}^{m_1(T)} Y_j  \, \Big|\, G\right] \leq O\left(\frac{1}{\Delta^2} + \frac{1}{\Delta^4}\right) + \left(\frac{32\log T}{q_1\Delta^2} + d_1(q_1)+1 \right) \frac{6}{\Delta}  \label{eq:goal2}
\end{align}
Combining \eqref{eq:startK2}, \eqref{eq:decomp}, \eqref{eq:focus_good}, and the equations
above then yields the desired result.

Now, to check \eqref{eq:goal1}, we note that on $G$,
if $m_{\tau_2 - d_2(q_2)}(2) \ge \frac{24\log(T)}{q_2}$ then
\[
n_{\tau_2}(2) = n_{\tau_2 - d_2(q_2) + d_2(q_2)}(2)  \geq \frac{q_2}{2}m_{\tau_2 - d_2(q_2)}(2).
\]
This implies 
\begin{align*}
m_{\tau_2 - d_2(q_2)}(2) & \leq \max\left\{\frac{48\log T}{q_2\Delta^2}, \frac{24\log(T)}{q_2}\right\}\\
& = \frac{48\log T}{q_2\Delta^2} \,\, \text{by our assumption } \Delta \in (0,1] 
\end{align*}
Hence, we have 
\begin{align*}
    m_{\tau_2}(2) & = m_{\tau_2 - d_2(q_2)}(2) + m_{\tau_2}(2) - m_{\tau_2 - d_2(q_2)}(2) \\
    & \leq \frac{48\log T}{q_2\Delta^2} + d_2(q_2),
\end{align*}
and so in fact the inequality \eqref{eq:goal1} holds almost surely conditionally on $G$.

Next, to check \eqref{eq:goal2}, we bound $\EE{\sum_{j=1}^{T}Y_j\cond G}$ instead. Let $t_j = \inf\{t:m_t(1) = j\}$ be the time we pull arm 1 the $j$-th time. Let $X(j,s,y)$ denote the number of trials before a Beta$(s+1,j-s+1)$ exceeds $y$ as in \citet{pmlr-v23-agrawal12}. We proceed by proving the following lemma.
\begin{lemm} \label{lemm:bound_Yj}
Let $s(j)$ be the number of successes among the $j$ observed rewards from arm 1. Then for $j \ge j_0$, 
\begin{align}
&\sum_{j=1}^{T}\EE{Y_j\cond G} \notag \\
&\le \sum_{j=1}^{T}\EE{\min\{X(n_{t_j}(1), s(n_{t_j}(1)), \mu_2+\frac{\Delta}{2})), T\}} \notag  \\
& + \sum_{k=1}^{T}\EE{\min\{X(k, s(k), \mu_2+\frac{\Delta}{2}), T\}}\label{eq:bound_Yj}
\end{align}
\end{lemm}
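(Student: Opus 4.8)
The plan is to lift the Agrawal--Goyal decomposition of $Y_j$ through the auxiliary variables $X(\cdot,\cdot,\cdot)$ to the delayed regime, where the new difficulty is that the posterior of arm $1$ no longer stays frozen between consecutive pulls of arm $1$, since delayed rewards from earlier arm-$1$ pulls can arrive in the middle of a window. I would first record the elementary facts that inside any window $\{t:m_t(1)=j\}$ the agent plays arm $2$ at every round except the first, so that $Y_j$ equals the number of arm-$2$ rounds in that window and $\theta_2(t)\ge\theta_1(t)$ at each of them; and that every pull of arm $2$ in a window with $j<j_0$ occurs before $\tau_2$ and is therefore already subsumed by \eqref{eq:goal1}, so it is enough to bound the arm-$2$ pulls inside windows with $j\ge j_0$.

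Next I would invoke $\neg F_2$. Every round of a window with $j\ge j_0$ happens at a time $\ge\tau_2$, hence with $n_t(2)\ge 24\log T/\Delta^{2}$, so on $G$ we get $\theta_2(t)\le\mu_2+\tfrac{\Delta}{2}=:y$; together with $\theta_2(t)\ge\theta_1(t)$ this forces $\theta_1(t)\le y$ at \emph{every} arm-$2$ round of these windows. This is the step that turns ``arm $1$ loses the $\argmax$'' into ``the sample $\theta_1(t)$ fell below the fixed threshold $y$'', and it is exactly where $F_2$ and the choice $y=\mu_2+\Delta/2$ enter.

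The core of the argument is to restore a frozen posterior by slicing time along the value of $n_t(1)$. On any maximal interval where $n_t(1)\equiv k$, the successive draws $\theta_1(t)$ are, conditionally on the past and on the delay realizations (which pin down both the interval and $s(k)$), i.i.d.\ $\mathrm{Beta}(s(k)+1,\,k-s(k)+1)$; since every arm-$2$ round on such an interval has $\theta_1(t)\le y$, the number of arm-$2$ rounds it contains is dominated, under the natural coupling of these draws with a single i.i.d.\ $\mathrm{Beta}$ sequence, by $\min\{X(k,s(k),y),T\}$. I would then decompose each window $j\ge j_0$ into its \emph{initial} segment, on which $n_t(1)$ still equals its window-start value $n_{t_j}(1)$, and the remainder: the initial segment contributes at most $\min\{X(n_{t_j}(1),s(n_{t_j}(1)),y),T\}$, and summing over $j$ gives the first sum of \eqref{eq:bound_Yj}. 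Re-indexing the remaining parts by the value $k$ of $n_t(1)$, for each $k$ there is at most one constant-$n_t(1)\!\equiv\!k$ segment that is not the initial segment of a window---the tail (within $\{t:n_t(1)=k\}$) of whichever window was in progress when $n_t(1)$ first reached $k$---and it contributes at most $\min\{X(k,s(k),y),T\}$; summing over $k\in[T]$ gives the second sum. Taking expectations conditional on $G$ and adding up yields \eqref{eq:bound_Yj}.

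I expect the main obstacle to be precisely this bookkeeping-plus-coupling step: one has to check that the windows and the constant-$n_t(1)$ intervals tile the relevant arm-$2$ rounds so that each term $X(\cdot)$ is charged no more often than it appears in \eqref{eq:bound_Yj} (nothing double-counted, nothing omitted, the arm-$1$ pull that ends each window and the boundary at $j_0$ handled correctly), and that inside each segment the fresh $\theta_1$ draws can legitimately be coupled to an i.i.d.\ $\mathrm{Beta}$ sequence whose hitting time is $X(k,s(k),y)$---delicate because the segment endpoints and the success count $s(k)$ are themselves random and interleaved with the draws, so the stochastic domination has to be set up conditionally on the right $\sigma$-field before expectations are taken. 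The remaining ingredients (the $\argmax$ comparison $\theta_2\ge\theta_1$, the use of $\neg F_2$, and truncating the $X$'s at $T$) are routine.
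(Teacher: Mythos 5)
Your argument is correct and takes essentially the same route as the paper's: both reduce each window $\{t:m_t(1)=j\}$ to the hitting times $X(k,s(k),y)$ for the posterior states $k=n_t(1)$ occurring in it (using $\neg F_2$ and $\theta_2(t)\ge\theta_1(t)$ to turn ``arm 2 is pulled'' into ``$\theta_1(t)\le y$''), and both use the identical accounting that each index $k$ is charged at most once beyond the windows' starting indices $n_{t_j}(1)$. The only cosmetic difference is that the paper first bounds $Y_j$ by the maximum of the $X$'s over the window and then by their sum, whereas you partition the window into constant-$n_t(1)$ segments directly; your version arguably makes the coupling step more explicit, but the decomposition and the resulting bound are the same.
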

\begin{proof}
See section \ref{proof:lemm3.3} of the appendix.
\end{proof}
To bound right side of \eqref{eq:bound_Yj}, we use the following lemma from \cite{agarawalgoyal2017} and its corollary.

\begin{lemm}
\label{lem:exj}
For any $i \ne 1$, let $y_i = \mu_i + \frac{\Delta_i}{2}$. Let $D_i$ denote the KL-divergence between $\mu_1$ and $y_i$, i.e.
 $D_i= y_i \log \frac{y_i}{\mu_1} + (1-y_i) \log \frac{1-y_i}{1-\mu_1}.$ Then
\begin{eqnarray*}
 \EE{ \min\{X(k, s(k), y_i), T \} }& \le & \ \left\{\begin{array}{ll}
   \displaystyle  \frac{6}{\Delta_i},&  k < \frac{6}{\Delta_i}\\
   \displaystyle O\left(e^{-\Delta_i^2k/8}+\frac{4}{(k+1)\Delta_i^2}e^{-D_i k}+ \frac{1}{e^{\Delta_i^2 k/16}-1}\right) , & k \ge \frac{6}{\Delta_i}\\
   \end{array}\right.
\end{eqnarray*}
In particular, when $k \ge \frac{16\log T}{\Delta_i^2}$, $\EE{ \min\{X(k, s(k), y_i), T \} } = O(\frac{1}{T})$
\end{lemm}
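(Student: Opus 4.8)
The plan is to condition on the number of observed successes and reduce the claim to a moment bound on $1/p_{k,s}$, where $p_{k,s} := \mathbb{P}[\mathrm{Beta}(s+1,k-s+1) > y_i]$. Since arm-$1$ rewards are Bernoulli$(\mu_1)$, we have $s(k) \sim \mathrm{Bin}(k,\mu_1)$, and conditionally on $s(k) = s$ the quantity $X(k,s,y_i)$ is geometric with success probability $p_{k,s}$; hence $\mathbb{E}[\min\{X(k,s,y_i),T\}\mid s] \le \min\{1/p_{k,s},\,T\}$, and so $\mathbb{E}[\min\{X(k,s(k),y_i),T\}] \le \mathbb{E}_{s(k)\sim\mathrm{Bin}(k,\mu_1)}[1/p_{k,s(k)}]$, with the extra truncation against $T$ kept in reserve for when $p_{k,s}$ is tiny. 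Everything then reduces to controlling $\mathbb{E}_s[1/p_{k,s}]$, for which I would invoke the Beta--Binomial duality $p_{k,s} = \mathbb{P}[\mathrm{Bin}(k+1,y_i)\le s]$. Because $y_i = \mu_1 - \Delta_i/2 < \mu_1$, a draw $s(k)\sim\mathrm{Bin}(k,\mu_1)$ overshoots the mean $(k+1)y_i$ of $\mathrm{Bin}(k+1,y_i)$ with high probability, so $p_{k,s}$ is typically bounded away from $0$.

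For $k \ge 6/\Delta_i$ I would split the expectation over $s$ into three ranges. On $s \ge \lceil (k+1)y_i\rceil$ the binomial median bound gives $p_{k,s}\ge 1/2$, contributing $O(1)$ (in fact absorbed into the $1/(e^{\Delta_i^2 k/16}-1)$ term once the concentration of $s(k)$ is used). On an intermediate band of $s$ just below $(k+1)y_i$, I would bound $1/p_{k,s}$ by a $\mathrm{poly}(k,1/\Delta_i)$ factor via a lower bound on the binomial lower tail, while $\mathbb{P}[s(k)=s]$ for $s$ below $y_i k$ is a moderate deviation of $\mathrm{Bin}(k,\mu_1)$ decaying at the Kullback--Leibler rate $D_i$; multiplying these produces the $\frac{4}{(k+1)\Delta_i^2}e^{-D_i k}$ term. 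On the remaining range of much smaller $s$, $1/p_{k,s}$ can be as large as $(1-y_i)^{-(k+1)}$, but there Chernoff estimates of the form $\mathbb{P}[s(k)\le s]\le e^{-\Delta_i^2 k/2}$ dominate, and summing the products as a geometric-type series yields the $e^{-\Delta_i^2 k/8}$ and $1/(e^{\Delta_i^2 k/16}-1)$ contributions.

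For $k < 6/\Delta_i$ the exponential terms are not small, so instead I would bound $\mathbb{E}_s[1/p_{k,s}] = \sum_{s=0}^k \binom{k}{s}\mu_1^s(1-\mu_1)^{k-s}/p_{k,s}$ uniformly, following the term-by-term comparison of \citet{agarawalgoyal2017}: compare the $\mathrm{Bin}(k,\mu_1)$ weights with the $\mathrm{Bin}(k+1,y_i)$ weights appearing in $p_{k,s} = \sum_{\ell\le s}\mathbb{P}[\mathrm{Bin}(k+1,y_i)=\ell]$, so that after a telescoping cancellation the sum collapses to something bounded by $1 + c/\Delta_i$, and a choice of constants gives the clean bound $6/\Delta_i$ (which, being at least $6$, is also a valid if wasteful bound for every $k$). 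Finally, the $O(1/T)$ corollary for $k\ge 16\log T/\Delta_i^2$ follows by substituting this threshold into the three tail terms: $e^{-\Delta_i^2 k/8}\le T^{-2}$, $1/(e^{\Delta_i^2 k/16}-1) = O(1/T)$, and, since Pinsker gives $D_i\ge \Delta_i^2/2$, also $\frac{4}{(k+1)\Delta_i^2}e^{-D_i k} = O(1/T)$.

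The main obstacle is the intermediate range in the $k\ge 6/\Delta_i$ case: one must balance the blow-up of $1/p_{k,s}$ against the binomial lower tail at precisely the right rate, and in particular recover the genuine KL exponent $D_i$ rather than a cruder Pinsker-type $\Delta_i^2$ rate. This needs a moderate-deviation estimate for $\mathrm{Bin}(k,\mu_1)$ that is tight up to the KL rate together with a matching lower bound on $\mathbb{P}[\mathrm{Bin}(k+1,y_i)\le s]$ in the same window --- exactly the delicate bookkeeping carried out in \citet{agarawalgoyal2017}, whose argument I would follow rather than re-derive from scratch.
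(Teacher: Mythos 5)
Your outline is correct and is essentially the same approach as the source: the paper offers no proof of this lemma, importing it verbatim from \citet{agarawalgoyal2017}, and your sketch (conditioning on $s(k)\sim\mathrm{Bin}(k,\mu_1)$, the geometric/Beta--Binomial reduction to $\mathbb{E}[1/p_{k,s}]$ with $p_{k,s}=\mathbb{P}[\mathrm{Bin}(k+1,y_i)\le s]$, and the three-range split over $s$) is precisely the structure of that reference's argument. Deferring the delicate KL-rate bookkeeping to \citet{agarawalgoyal2017} is exactly what the paper does, so there is nothing to add.
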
 
\begin{coro} \label{coro:sum_Xk}
Under the assumption of Lemma \ref{lem:exj}, we further have the following bound 
\begin{equation*}
    \sum_{k=1}^{T} \EE{ \min\{X(k, s(k), y_i), T \} }
    \le  O\left(\frac{1}{\Delta_i^2} + \frac{1}{\Delta_i^2D_i} + \frac{1}{\Delta_i^4}\right) 
\end{equation*}
\end{coro}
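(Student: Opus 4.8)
The plan is to split the sum at the threshold $k = 6/\Delta_i$ that already appears in Lemma~\ref{lem:exj}, handle the short initial segment trivially, and bound the remaining tail by extending $\sum_{k=1}^T$ to $\sum_{k=1}^\infty$ and summing the three terms in the second case of the lemma as (essentially) geometric series.

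First, for $k < 6/\Delta_i$ there are at most $6/\Delta_i$ indices, each contributing at most $6/\Delta_i$ by the first case of Lemma~\ref{lem:exj}, so this segment contributes $O(1/\Delta_i^2)$. For $k \ge 6/\Delta_i$ I would treat the three summands inside the $O(\cdot)$ of Lemma~\ref{lem:exj} separately. The term $\sum_k e^{-\Delta_i^2 k/8}$ is geometric with ratio $e^{-\Delta_i^2/8}$; since $\Delta_i \in (0,1]$ we have $1 - e^{-\Delta_i^2/8} \ge \Delta_i^2/16$, so it sums to $O(1/\Delta_i^2)$. The term $\sum_k \frac{4}{(k+1)\Delta_i^2} e^{-D_i k}$ I bound by $\frac{4}{\Delta_i^2}\sum_k e^{-D_i k}$ after dropping $1/(k+1) \le 1$; the geometric sum is $O(\max\{1, 1/D_i\})$, whose ``$1$'' part is absorbed into $O(1/\Delta_i^2)$ and whose ``$1/D_i$'' part gives $O(1/(\Delta_i^2 D_i))$.

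The only term requiring care is $\sum_k 1/(e^{\Delta_i^2 k/16} - 1)$, since for small $\Delta_i^2 k$ the summand is comparable to $16/(\Delta_i^2 k)$, so a naive termwise geometric bound does not apply and the raw sum of $16/(\Delta_i^2 k)$ diverges. I would split at $k_1 = 16/\Delta_i^2$: on $6/\Delta_i \le k \le k_1$ use $e^x - 1 \ge x$ to get summands at most $16/(\Delta_i^2 k) \le 8/(3\Delta_i)$, and since there are at most $k_1 = O(1/\Delta_i^2)$ such indices this contributes $O(1/\Delta_i^3) \subseteq O(1/\Delta_i^4)$ using $\Delta_i \le 1$; on $k > k_1$ we have $\Delta_i^2 k/16 > 1 > \log 2$, so $e^{\Delta_i^2 k/16} - 1 \ge \tfrac12 e^{\Delta_i^2 k/16}$, and the resulting geometric tail sums to $O(1/\Delta_i^2)$. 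Adding up the four contributions yields $O(1/\Delta_i^2 + 1/(\Delta_i^2 D_i) + 1/\Delta_i^4)$.

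The main obstacle, modest as it is, is precisely this third term: it is the one place where the tail of Lemma~\ref{lem:exj} cannot be summed by a convergent geometric bound term by term, so the threshold split at $k_1 \approx 16/\Delta_i^2$ is needed. One should also note in passing that the constant hidden in the $O(\cdot)$ of Lemma~\ref{lem:exj} is a per-$k$ bound uniform in $k$, which is what legitimizes summing it over $k$; apart from that, every step is a routine estimate of a geometric (or finite) series.
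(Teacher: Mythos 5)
Your proof is correct. The paper states this corollary without any proof (presenting it as an immediate consequence of Lemma \ref{lem:exj}), and your computation---handling the $k<6/\Delta_i$ segment trivially, summing the two genuinely geometric tail terms, and splitting $\sum_k \bigl(e^{\Delta_i^2 k/16}-1\bigr)^{-1}$ again at $k\approx 16/\Delta_i^2$---is exactly the calculation being left implicit; your bound of $O(1/\Delta_i^3)$ for that last piece is in fact slightly sharper than the stated $O(1/\Delta_i^4)$, which it implies since $\Delta_i\le 1$.
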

Now we bound $n_{t_j}(1)$. Condition on $G$, since $G \subset \neg F_1$, we know if $m_t(1) \geq \frac{32\log T}{q_1\Delta^2}$ then $n_{t+d_1(q_1)}(1) \ge \frac{q_1}{2}m_t(1) \ge \frac{16 \log T}{\Delta^2}$. Hence,  let $l = \frac{32\log T}{q_1\Delta^2}$, we have $m_{t_l}(1) \ge \frac{32 \log T}{q_1\Delta^2}$, and 
\[
n_{t_j}(1) \ge n_{t_l+d_1(q_1)}(1) \ge \frac{16 \log T}{\Delta^2}.
\]
Let $M = \lceil \frac{32\log T}{q_1\Delta^2} + d_1(q_1)\rceil$, $y = \mu_2 + \frac{\Delta}{2}$ and $D = y \log \frac{y}{\mu_1} + (1-y) \log \frac{1-y}{1-\mu_1}$ we then have 
\begin{align*}
    &\EE{\sum_{j=1}^{T}Y_j\cond G}\\
    &\le \sum_{j = 1}^{T}\EE{\min\{X(n_{t_j}(1), s(n_{t_j}(1)), \mu_2+\frac{\Delta}{2})), T\}\cond G} + \\
    &+ \sum_{k=1}^{T}\EE{\min\{X(k, s(k), \mu_2+\frac{\Delta}{2}), T\}\cond G} \\
    & \le \sum_{j = 1}^{M}\EE{\min\{X(n_{t_j}(1), s(n_{t_j}(1)), \mu_2+\frac{\Delta}{2})), T\}\cond G} \\
    &+ \sum_{j = M+1}^{T}\EE{\min\{X(n_{t_j}(1), s(n_{t_j}(1)), \mu_2+\frac{\Delta}{2})), T\}\cond G} \\
    &+O\left(\frac{1}{\Delta^2} + \frac{1}{\Delta^2D} + \frac{1}{\Delta^4}\right) \\
    &\le M\max_n \EE{\min\{X(n, s(n), \mu_2+\frac{\Delta}{2})), T\}\cond G} + \frac{16}{T}T \\
    &+O\left(\frac{1}{\Delta^2} + \frac{1}{\Delta^2D} + \frac{1}{\Delta^4}\right) \\
    & \le \frac{6M}{\Delta} + O\left(\frac{1}{\Delta^2} + \frac{1}{\Delta^2 D} + \frac{1}{\Delta^4}\right) 
\end{align*}
\end{proof}

\subsection{Multi-arm case}
Now we present the results in the case of more than 2 arms. In this case we have the following result. 
\begin{theo}\label{thm:n-arm}
Under assumption 1 and 2 with $K>2$, suppose further we have i.i.d delays for each arm. Then the regret
\eqref{eq:regret} of Algorithm \ref{algo:TS_delay} is bounded by 
\begin{align*}
    & \min_{q_i \in (0,1]} \sum_{i=2}^{K} \frac{48\log T}{q_i\Delta_i} + d_i(q_i)\Delta_i \\
    & + \sum_{i=2}^{K} O\left(\frac{1}{\Delta_i} + \frac{1}{\Delta_i^3}\right) + \left(\frac{32\log T}{q_1\Delta_i} + d_1(q_1)\Delta_i+\Delta_i \right)\frac{6}{\Delta_i} \\
    & + \sum_{i=2}^{K} O\left(\frac{1}{\Delta_i} + \frac{1}{\Delta_i^3}\right) \left(\sum_{i=2}^{K}\frac{48\log T}{q_i\Delta_i^2} + d_i(q_i)\right)+4(K-1) 
\end{align*}
\end{theo}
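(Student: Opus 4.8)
The plan is to reduce the $K$-arm case to the two-arm analysis already carried out, handling the additional complication that a suboptimal pull of arm $i$ can now be triggered by \emph{any} other arm's posterior sample dominating, not just arm 1's. First I would generalize the decomposition used in the proof of Theorem~\ref{thm:2-arm}: for each suboptimal arm $i \ge 2$ I will count $m_T(i)$ by splitting on whether arm $i$ has yet accumulated enough observations. Concretely, let $\tau_i = \inf\{t : n_t(i) \ge 24\Delta_i^{-2}\log T\}$, and bound $m_T(i)$ by $m_{\tau_i}(i)$ plus the number of post-$\tau_i$ pulls of arm $i$. The first term is handled exactly as in \eqref{eq:goal1}, giving the $\tfrac{48\log T}{q_i\Delta_i^2} + d_i(q_i)$ contribution after multiplying by $\Delta_i$ in the regret sum; this is where the first line of the stated bound comes from.

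For the post-$\tau_i$ pulls, I would re-run the $Y_j$ argument but now tracking the $j$-th pull of arm~$1$ as the reference clock, as before. On the good event (a multi-arm analogue of $G$, intersecting $\neg F_1$ over all arms and $\neg F_2$ for each suboptimal arm with $\Delta$ replaced by $\Delta_i$), whenever we pull a suboptimal arm $i$ after $\tau_i$, either $\theta_1(t)$ failed to exceed $\mu_2 + \Delta/2$-type thresholds, or some $\theta_i(t)$ overshot. The key point is that pulling \emph{any} suboptimal arm $i$ between consecutive pulls of arm~$1$ requires $\theta_i(t) \ge \theta_1(t)$, so the count of such pulls is controlled by the same $X(n_{t_j}(1), s(n_{t_j}(1)), \cdot)$ quantities from Lemma~\ref{lemm:bound_Yj}, except that each ``failure'' of arm~$1$ to dominate can now be followed by a pull of any of the $K-1$ suboptimal arms. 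This produces the factor $(K-1)$ in the final additive term, and after applying Corollary~\ref{coro:sum_Xk} with $\Delta_i$ in place of $\Delta$, the $\sum_i O(\Delta_i^{-2} + \Delta_i^{-4})$ terms. Multiplying these post-$\tau_i$ pull counts by $\Delta_i$ when summing regret, and using $\Delta_i \le 1$ to simplify powers, yields the second line of the bound.

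The third line of the bound, involving the product $\big(\sum_i O(\Delta_i^{-1}+\Delta_i^{-3})\big)\big(\sum_i \tfrac{48\log T}{q_i\Delta_i^2} + d_i(q_i)\big)$, comes from the cross-term: the number of times arm~$1$'s posterior sample is \emph{beaten} depends on how much total mass the suboptimal arms have soaked up, which is itself bounded by the first-line quantities. I would make this precise by noting that between the $j$-th and $(j{+}1)$-st pull of arm~$1$, the expected number of suboptimal pulls is bounded by a sum over $i$ of $X$-type terms evaluated at $n_{t_j}(i)$, and that $n_{t_j}(i)$ grows like the already-bounded pull counts; summing the resulting geometric-type series over $j$ contributes this product. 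The constant $4(K-1)$ absorbs the $\mathbb{P}[\neg G]$ correction (which by the union bound over $K-1$ copies of $F_2$ and one $F_1$ is at most $(2(K-1)+1)/T$, contributing an $O(K)$ term) together with the ``$+1$'' rounding terms in the $\lceil\cdot\rceil$ definitions.

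The main obstacle I anticipate is the bookkeeping in the $Y_j$ step: in the two-arm case a pull of arm~$2$ is \emph{equivalent} to $\theta_2(t) \ge \theta_1(t)$, but with $K$ arms we must argue that the event ``some suboptimal arm is pulled'' still decomposes cleanly enough that each contributing term is an $X(\cdot,\cdot,\mu_i + \Delta_i/2)$ quantity to which Lemma~\ref{lem:exj} applies---and crucially that the thresholds $\mu_i + \Delta_i/2$ still separate from $\mu_1$. I would handle this by conditioning on which suboptimal arm is pulled and, on the good event, using that $n_t(i)$ is large enough ($\ge 16\Delta_i^{-2}\log T$) past the relevant delay-shifted time that each such term is $O(1/T)$ except for the first $M_i = \lceil \tfrac{32\log T}{q_1\Delta_i^2} + d_1(q_1)\rceil$ values of $j$, exactly paralleling the final display in the proof of Theorem~\ref{thm:2-arm}. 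Summing the $O(1/T)$ tails over $i$ and over $t \le T$ gives the benign $O(K)$-type contributions already subsumed in the stated constants.
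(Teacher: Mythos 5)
Your treatment of the unsaturated phase is correct and matches the paper: defining $\tau_i$, bounding $m_{\tau_i}(i) \le \frac{48\log T}{q_i\Delta_i^2} + d_i(q_i)$ on the good event exactly as in \eqref{eq:goal1}, and multiplying by $\Delta_i$ gives the first line of the bound. The gap is in the saturated phase. You propose to count, for each suboptimal arm $i$, its pulls between consecutive pulls of arm $1$ by the waiting time for $\theta_1$ to clear the threshold $\mu_i+\Delta_i/2$, ``conditioning on which suboptimal arm is pulled.'' This does not go through as stated: within a single interval $I_j$ between pulls of arm $1$, the sample $\theta_1(t)$ can exceed $\mu_i+\Delta_i/2$ (which ends the run that $X(\cdot,\cdot,\mu_i+\Delta_i/2)$ counts) while still losing to some other arm's sample, so arm $1$ is not pulled, the interval continues, and arm $i$ can be pulled again later in the same $I_j$. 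Hence a single $X$ quantity per arm per interval does not dominate the count; this clock-reset problem is exactly the difficulty the multi-arm argument has to address, and your sketch acknowledges it without resolving it.

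The paper (following Agrawal and Goyal) resolves it by working with the set $C(t)$ of saturated arms, the event $M(t)=\{\theta_1(t)>\max_{i\in C(t)}(\mu_i+\Delta_i/2)\}$, and sub-intervals $I_j(\ell)$ delimited by occurrences of $M(t)$: within each sub-interval the steps on which $a$ is the best saturated arm are dominated by an $X(\cdot,\cdot,\mu_a+\Delta_a/2)$ term, the regret per such step is bounded by $3\Delta_a$ using \emph{two-sided} concentration of all saturated samples (your good event carries only the one-sided $F_2$ from the two-arm proof, which is not enough for this step), and the number of sub-intervals $\gamma_j+1$ is what generates the cross term. Your account of the third line of the bound (``$n_{t_j}(i)$ grows like the already-bounded pull counts; summing the resulting geometric-type series'') does not match this mechanism: the product arises because $\sum_j\gamma_j$ is bounded by the total number of unsaturated pulls, i.e.\ by $\sum_{i\ne 1}\bigl(\frac{48\log T}{q_i\Delta_i^2}+d_i(q_i)\bigr)$, and this multiplies the full sum $\sum_k\mathbb{E}\bigl[\min\{X(k,s(k),\cdot),T\}\bigr]$ controlled by Corollary~\ref{coro:sum_Xk}. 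Likewise the $4(K-1)$ comes from the saturated-regret lemma (Lemma~\ref{lemm:saturated}), not from the $\mathbb{P}[\neg G]$ correction. Without the $M(t)$/$V_j^{\ell,a}$ bookkeeping and the $3\Delta_a$ step, the second and third lines of the stated bound are not established.
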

\begin{proof}
See section \ref{proof:thm3.6} of the appendix.
\end{proof}
\begin{coro}
The regret of Algorithm \ref{algo:TS_delay} when $K > 2$ is of order 
\[
O\left(\sum_{i\ne1} \frac{\log T}{\Delta_i}\left(\frac{1}{q_1}+\frac{1}{q_i}\right)+(d_1(q_1)+d_i(q_i))\Delta_i\right)
\]
\end{coro}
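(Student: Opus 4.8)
The plan is to obtain the corollary purely by simplifying the explicit bound of Theorem~\ref{thm:n-arm}, exactly as the order statement \eqref{eq:K2simple} is read off from Theorem~\ref{thm:2-arm}. Throughout I treat the gaps $\Delta_i$ and the arm count $K$ as fixed constants --- the convention implicit in an ``of order'' statement of this kind --- so that the relevant regime is $T\to\infty$, and the task reduces to collecting the terms of Theorem~\ref{thm:n-arm} according to their growth in $T$ and absorbing the $T$-independent remainder into the $O(\cdot)$. The claimed bound is asserted for each choice of the $q_i$ (equivalently, after minimizing over them), so it suffices to work with a fixed admissible $(q_1,\dots,q_K)$.

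First I would pull out the genuinely $\log T$-growing pieces. The first sum in Theorem~\ref{thm:n-arm} already contributes $\sum_{i\ne1}\tfrac{48\log T}{q_i\Delta_i}+d_i(q_i)\Delta_i$, which accounts for the $\tfrac{\log T}{q_i\Delta_i}$ and $d_i(q_i)\Delta_i$ parts of the claim. For the remaining $q_1$-dependence, expand $\bigl(\tfrac{32\log T}{q_1\Delta_i}+d_1(q_1)\Delta_i+\Delta_i\bigr)\tfrac{6}{\Delta_i}=\tfrac{192\log T}{q_1\Delta_i^2}+6\,d_1(q_1)+\tfrac{6}{\Delta_i}$; with $\Delta_i$ held constant the first summand is $\Theta\bigl(\tfrac{\log T}{q_1\Delta_i}\bigr)$, the second $\Theta(d_1(q_1)\Delta_i)$, and the third $O(1)$, which supplies the $\tfrac{\log T}{q_1\Delta_i}$ and $d_1(q_1)\Delta_i$ contributions.

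It then remains to check that everything else in Theorem~\ref{thm:n-arm} is lower order. The terms $\sum_{i\ne1}O\bigl(\tfrac{1}{\Delta_i}+\tfrac{1}{\Delta_i^3}\bigr)$ and $4(K-1)$ are $T$-independent constants, hence dominated by any of the $\log T$ terms already collected. The only mildly delicate piece --- and the step I would flag as the main (still minor) obstacle --- is the product $\bigl[\sum_{i\ne1}O(\tfrac{1}{\Delta_i}+\tfrac{1}{\Delta_i^3})\bigr]\bigl[\sum_{i\ne1}\tfrac{48\log T}{q_i\Delta_i^2}+d_i(q_i)\bigr]$, since its second factor still carries $\log T$: here the first factor is a $T$-independent constant, so the product is $O\bigl(\sum_{i\ne1}\tfrac{\log T}{q_i\Delta_i^2}+d_i(q_i)\bigr)$, and term by term --- with $\Delta_i$ constant --- this matches the order of the $\tfrac{\log T}{q_i\Delta_i}$ and $d_i(q_i)\Delta_i$ terms already accounted for. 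Absorbing all of these into the big-$O$ and writing $\sum_{i\ne1}$ for $\sum_{i=2}^{K}$ yields the stated bound. I do not expect a genuine difficulty; the one point worth stating carefully is the ``$\Delta_i,K$ constant'' convention, without which the factor $\tfrac{1}{\Delta_i^2}$ multiplying $\log T/q_1$ in Theorem~\ref{thm:n-arm} does not literally reduce to the $\tfrac{1}{\Delta_i}$ written in the corollary --- the same harmless abuse already present in passing from Theorem~\ref{thm:2-arm} to \eqref{eq:K2simple}.
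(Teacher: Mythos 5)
Your proposal is correct and matches the paper's (implicit) derivation: the corollary is simply read off from the explicit bound of Theorem~\ref{thm:n-arm} by collecting the $\log T$-growing and delay-quantile terms, bounding the product term by noting its first factor is $T$-independent, and absorbing the constant remainder --- exactly as \eqref{eq:K2simple} is read off from Theorem~\ref{thm:2-arm}. You are also right to flag that the $\tfrac{192\log T}{q_1\Delta_i^2}$ term only collapses to $O\bigl(\tfrac{\log T}{q_1\Delta_i}\bigr)$ (and $6\,d_1(q_1)$ to $O(d_1(q_1)\Delta_i)$) under the ``$\Delta_i$ treated as constant'' convention, which is the same looseness already present in the paper's own passage from the two-arm theorem to its order statement.
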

\textbf{Remark:} The bound in \citet{MAB_unrestricted_delay} in this case is 
\begin{align}
\label{eq:logK}
    & \min_{q_i} \sum_{i\ne 1} \frac{40 \log T}{\Delta_i}\left(\frac{1}{q_1}+\frac{1}{q_i}\right)
     + \log(K)\max_{i\ne 1}\{(d_1(q_1)+d_i(q_i))\Delta_i\}.
\end{align}
If we consider the number of arms $K$ to be fixed then this bound has the same scaling as ours.
However, if $K$ can grow, then the cost of delays in the bound of \citet{MAB_unrestricted_delay} has a better
$K$-scaling than ours (logarithmic as opposed to linear). It would be an interesting topic for further work
to see if an alternative proof could establish $\log(K)$-scaling for the cost of delays in Thompson sampling,
or if there exists a separation in results one can get from Thompson sampling versus the algorithms considered in \citet{MAB_unrestricted_delay}.
In any case, we note that in our experiments we consider many settings with a moderately large number of arms,  $K=20$,
and the good behavior of Thompson sampling there suggests that it is at least moderately robust to the large-$K$ setting.

\section{Numerical Experiments} \label{sec:experiments}

We have shown that in stochastic multi-armed bandit with i.i.d.~delays, Thompson Sampling can achieve comparable regret bounds as a variant based on Successive Elimination, which constructs upper confidence bounds. However, the advantage of Thompson Sampling under delays goes far beyond achieving good theoretical bounds. In this section, we demonstrate through extensive experiments that Thompson Sampling can often outperform a number of UCB variants under various delay structures. Specifically, we will compare Thompson Sampling with baselines under a number of delay settings considered in prior work, as well as in some new settings (including ones with non-i.i.d.~delays). Note that our implementation of Thompson Sampling does not change depending on the assumptions of delays, but the UCB variants used as baselines change from setting to setting in order to accommodate different delay distributions. We use the Bernoulli bandit setting for all the experiments.    

Table \ref{tab:experiment_summary} gives a summary of all settings we consider. We explicitly list the type of delays as well as whether the setup is considered in previous work. 

\begin{table}[htb]
\centering
\begin{tabular}{|ccc|}
 \hline
i.i.d? & Delay Type & Reference\\
\hline
Yes    & Fixed & \citet{MAB_unrestricted_delay} \\
Yes & $\alpha$-Pareto & \citet{gael20a}\\
Yes    & Packet-loss & \citet{MAB_unrestricted_delay}\\
Yes    & Geometric & \citet{vernade17}\\
Yes     & Uniform &  $\times$ \\
No      & Queue-based &   $\times$ \\
   \hline
\end{tabular}
\caption{Summary of all experimental settings we consider}
\label{tab:experiment_summary}
\end{table}

\subsection{Methods}
We consider the following methods for all experiments and also include methods designed to target certain settings in previous work if there is any.
\begin{itemize}
    \item Delayed-UCB1, straightforward adaptation of UCB1 \cite{Auer2004FinitetimeAO} in delay case outlined in \citet{joulani2013}. We employ random tie breaking.
    \item Successive Elimination with delays (SE), which is shown to achieve great theoretical and empirical performance with unrestricted delays in \citet{MAB_unrestricted_delay}. We show the algorithm details from their work in Algorithm \ref{algo:SE_delay}. We use the same radius when constructing upper confidence bounds as \cite{MAB_unrestricted_delay}.
    \item Thompson Sampling with delays (TS), detailed in Algorithm \ref{algo:TS_delay}.
\end{itemize}
Furthermore, whenever we reuse simulations from prior work (see Table \ref{tab:experiment_summary}),
we also consider as baselines algorithms proposed in the corresponding papers. Specifically,
for $\alpha$-Pareto delays we also consider the PatientBandit (PB) algorithm of \citet{gael20a},
for the packet-loss setting we also consider the Phased Successive Elimination (PSE) algorithm of 
\citet{MAB_unrestricted_delay}, and for geometric delays we also consider the UD-UCB algorithm
of \citet{vernade17}.

The main algorithms we consider, namely Delayed-UCB1, SE and TS, are all agnostic
to the delay distribution, i.e., the algorithm itself doesn't explicitly depend on what
we assume about delays. In contrast, the other baselines we use may depend on the delay
distributions---and when this is the case we let the algorithm use oracle information about
the delays (thus making our comparison potentially too favorable towards these baselines).

 \begin{algorithm}[ht]
   \SetKwInOut{Initializations}{Initializations}
   \KwIn{Number of rounds $T$, number of arms $K$}
   \Initializations{$S \gets [K]$, $t \gets 1$}
    \While{$t < T$}{
        Pull each arm $i \in S$ \\
        Observe any incoming feedback \\
        Set $t \leftarrow t + |S|$ \\
        Update lower and upper confidence bounds where the radius is $\sqrt{\frac{2}{\max\{n_t(i),1\}}}$ \\
        Remove from $S$ all arms $i$ such that exists $j$ with $UCB_t({i})<LCB_t({j})$
    }
    \caption{Successive Elimination with Delays}
    \label{algo:SE_delay}
\end{algorithm}

\subsection{I.I.D Delays}
In this part we mainly focus on experiments with i.i.d delays considered in Section \ref{sec:setup}.

\textbf{Fixed Delays.} In this setting, all the delays have a fixed value. As in \citet{MAB_unrestricted_delay}, we set this value to be 250. We fix the number of arms to be $K = 20$ and the means of each arm are generated uniformly from the interval $[0.25, 0.75]$. The maximum round is $T=20000$. All results are averaged cross 100 replications. Figure \ref{fig:fixed_delay_regret} shows the resulting regret plot. We see that TS significantly outperforms the UCB style algorithms and the cumulative regret plot plateaus long before the other two methods.

\begin{figure}[htb]
\vskip 0.1in
\begin{center}
\centerline{\includegraphics[scale=0.25]{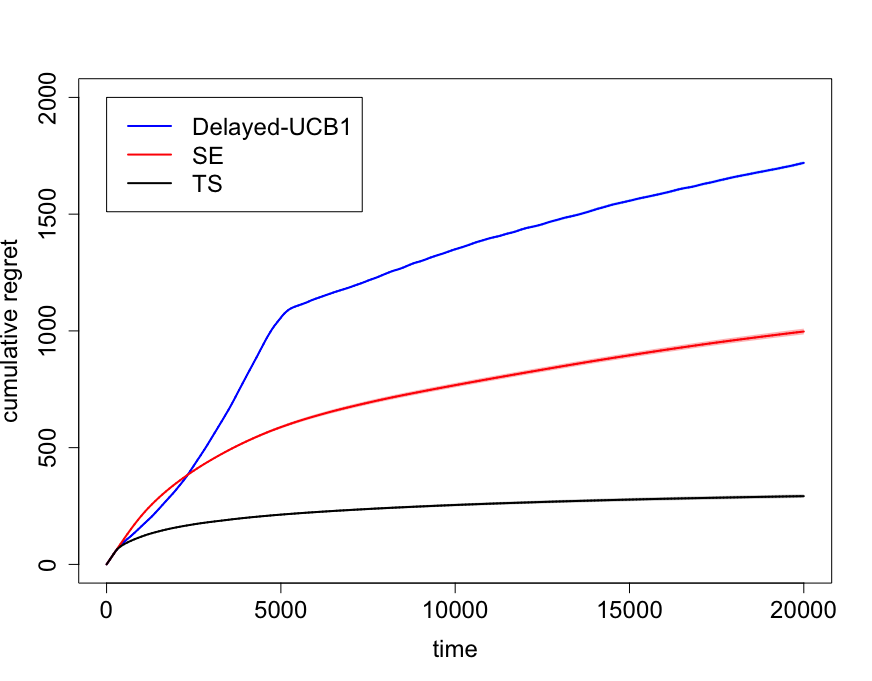}}
\caption{Regret of Delayed-UCB1, SE, TS for fixed delay 250, averaged over 100 replications. Error bars are displayed via shading.}
\label{fig:fixed_delay_regret}
\end{center}
\vskip -0.1in
\end{figure}
We also experimented with increasing and decreasing delays. As expected, the gap between UCB style algorithms and TS is not significant when the delay is 0 and widens as we increase the delay. 

\textbf{$\alpha$-Pareto delays.} These delays are distributions with heavy tails and infinite expectations with $\alpha \le 1$. The parameter $\alpha$ controls the tail behavior with heavier tails for smaller $\alpha'$s. We employ the experimental setup in \citet{gael20a} which has $K=2$ arms and $T=3000$ rounds. We fix the mean parameters to be $\mu_1 = 0.4$ and $\mu_2 = 0.45$ to make the problem instance reasonably difficult. We further let $\alpha_1 = 1$, which controls the tail of the delay distribution of the first arm. We vary the value of $\alpha_2$ and let it change in the set $\{0.2, 0.5, 0.8\}$. 

In case of $\alpha$-Pareto Delays, \citet{gael20a} proposed the PatientBandit (PB) algorithm which requires an input of oracle $\alpha$ that captures the tail decay of the delay distributions to construct upper confidence bounds. PB is a robust algorithm that, as shown in \citet{gael20a}, can also handle a partially observed setting where a feedback of 0 can mean either a reward of 0 or the delay has not passed. Figure \ref{fig:pareto_delay_regret} shows the resulting cumulative regret of all four algorithms averaged over 300 replications. 

\begin{figure}[htb]
\vskip 0.1in
\begin{center}
\centerline{\includegraphics[scale=0.25]{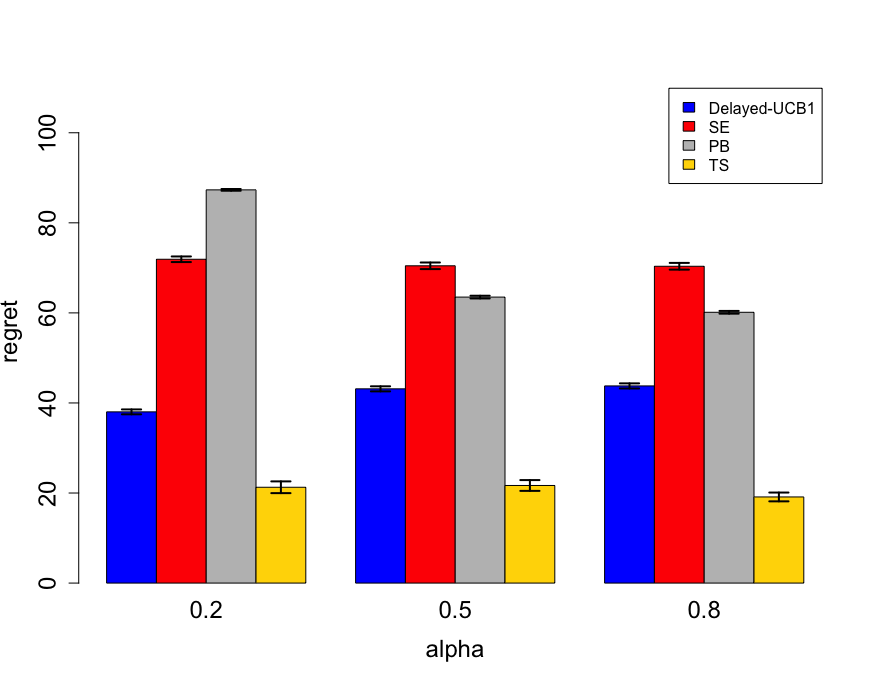}}
\caption{Regret of UCB without $\log$ factor, SE, TS and PB for Pareto distributed delays with varying $\alpha$ for the optimal arm 2, averaged over 300 replications}
\label{fig:pareto_delay_regret}
\end{center}
\vskip -0.1in
\end{figure}

We see that SE only outperforms PB when $\alpha$ is small, which means the delay distribution, i.e. when the delay distribution has a very heavy tail. This is partly due to the fact that PB algorithm uses a conservative upper confidence bound since it assumes a uniform $\alpha$ controlling the tails of all delay distributions. We also see that TS and Delayed-UCB1 are significantly better than SE and PB. In addition, TS has much smaller regret compared to Delayed-UCB1.

\textbf{Packet-loss.} This setting refers to a delay with infinite expectation. Specifically the delay will have a value of 0 with probability $p$ and infinity otherwise. In \citet{MAB_unrestricted_delay}, the Phased Successive Elimination (PSE) algorithm was provided specifically for this setting, which balances the observed reward counts of each arm during each phase and then does elimination based on upper and lower confidence bound. We run our experiments with $K = 20$ arms and the means of each arm are sampled uniformly from interval $[0.25, 0.75]$. We sample the probabilities $p$ of the packet loss uniformly from interval $[0,1]$. We average our results across 200 replications and run for $T=10000$ rounds in each replication. We use the same PSE algorithm details as in \citet{MAB_unrestricted_delay}.

\begin{figure}[htb]
\vskip 0.1in
\begin{center}
\centerline{\includegraphics[scale=0.25]{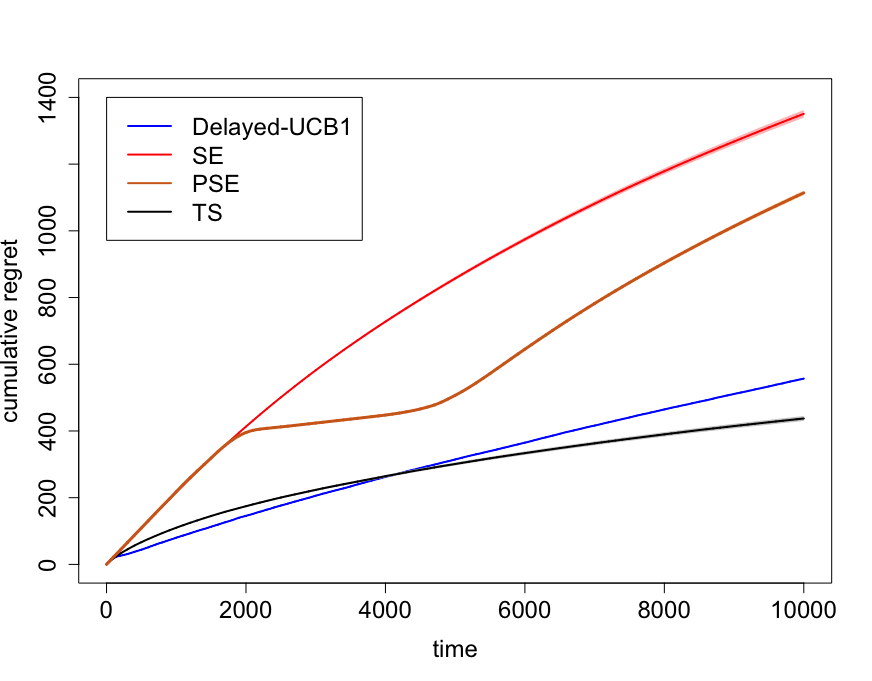}}
\caption{Regret of Delayed-UCB1, SE, PSE, TS algorithms under the packet loss setting where rewards are observed immediately with certain probability and not observed at all otherwise. Results are averaged over 200 runs. Error bars are displayed via shading.}
\label{fig:packet_regret}
\end{center}
\vskip -0.1in
\end{figure}

Figure \ref{fig:packet_regret} shows the resulting regret plot. We see that TS and Delayed-UCB stand out as the clear winners and TS performs slightly better. The reason that Delayed-UCB is comparable to TS is that in the parameter setting we use to run the experiments, the optimal arm has a small probability $(p=0.108)$ in packet loss. This means the probability of having infinite delay is close to 0.9 for the optimal arm. So the information about the optimal arm is very limited. In fact as we increase this probability we see a widening gap between the two. As we do not know in practice the optimal arms, we think even in this setting TS stands out as a safe choice to use. 

\textbf{Geometric Delays.} In this setting, the delays are Geometrically distributed, which means they can still be arbitrarily long but the expectation is finite. \citet{vernade17} proposed an algorithm which assumes a known delay distribution and used geometric delays in the simulation. Specifically, the algorithm uses CDF functions of the delay distribution to form a conditionally unbiased estimator. Then an upper confidence bound is formed to select which arm to pull at each round. We include this algorithm, which is called UD-UCB in \citet{vernade17} into our comparison. As in \citet{vernade17}, we ran for $T =10000$ rounds and let the means of three arms be $(0.5, 0.4, 0.3)$. We use $p= 0.01$ to sample the delay and average across 200 replications. 

Figure \ref{fig:geometric_delay_regret} shows the resulting regret plot. We see that TS performs much better than the other three methods and interestingly is also much better than UD-UCB even if UD-UCB knows the delay distribution and uses this extra information in the algorithm. This again shows the robustness of Thompson Sampling to various delays. 

\begin{figure}[htb]
\vskip 0.1in
\begin{center}
\centerline{\includegraphics[scale=0.25]{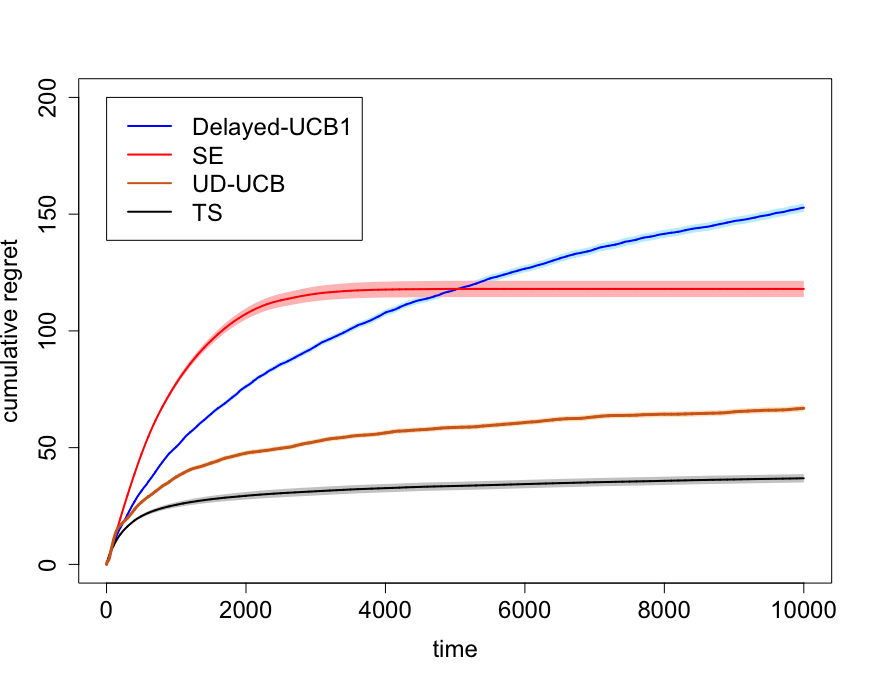}}
\caption{Regret of UCB, SE, UD-UCB, TS algorithms under delays sampled from Geometric$(0.01)$. Results are averaged over 200 runs. Error bars are displayed via shading.}
\label{fig:geometric_delay_regret}
\end{center}
\vskip -0.1in
\end{figure}

\textbf{Uniformly-distributed Delays.} Finally we consider a uniform delay distribution. We will use the same setup as experiments for fixed delays, namely $K=20$ arms and uniformly sampled means. However, instead of having a fixed value 250 we will sample delay uniformly from the integers in interval $[150, 300]$. In this setting we only consider the original three methods as we did not find any algorithm in the literature specifically designed to target such delay. We average across 100 replications. Figure \ref{fig:uniform_delay_regret} shows the resulting regret plot. We see again that TS performs significantly better than the other methods.

\begin{figure}[htb]
\vskip 0.1in
\begin{center}
\centerline{\includegraphics[scale=0.25]{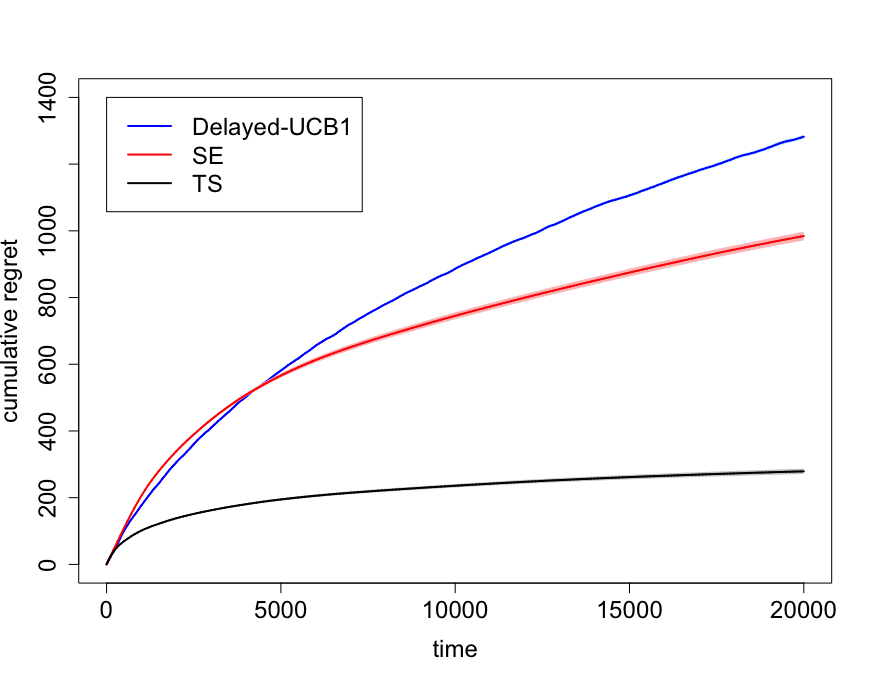}}
\caption{Regret of UCB, SE, TS algorithms under delays sampled uniformly from integers in $[0,300]$. Results are averaged over 100 runs. Error bars are displayed via shading.}
\label{fig:uniform_delay_regret}
\end{center}
\vskip -0.1in
\end{figure}

\subsection{Non i.i.d Delays} \label{sec: non-iid}
In our formal analysis we only considered the behavior of Thompson sampling under i.i.d delays.
Here, however, we seek to empirically validate its behavior with non-i.i.d. delays and find that,
although corresponding theory still remains to be developed, Thompson Sampling still appears to
achieve reasonable performance when facing such delays.

\textbf{Queue-based Delay.} We describe a queue-based delay mechanism here which is inspired by classical models for queuing system \cite{Kelly2011book}. The basic idea is once we select an arm $i$, the current action goes to the queue for arm $i$. If there is no other actions in the queue, i.e. the queue is empty then the reward is revealed immediately. Otherwise, the reward will be revealed when the other actions in the queue are cleared. The time to clear an action for each arm will be an exponential distribution with rate 0.1. Clearly from our description, this is a non i.i.d scenario. We let $K = 5$, sample means uniformly from $[0.25, 0.75]$ and average over 200 runs. Figure \ref{fig:queue_delay_regret} shows the resulting regret plot. We see that even under this non i.i.d delay scenario, Thompson Sampling works much better than the other two methods. 

\begin{figure}[htb]
\begin{center}
\centerline{\includegraphics[scale=0.25]{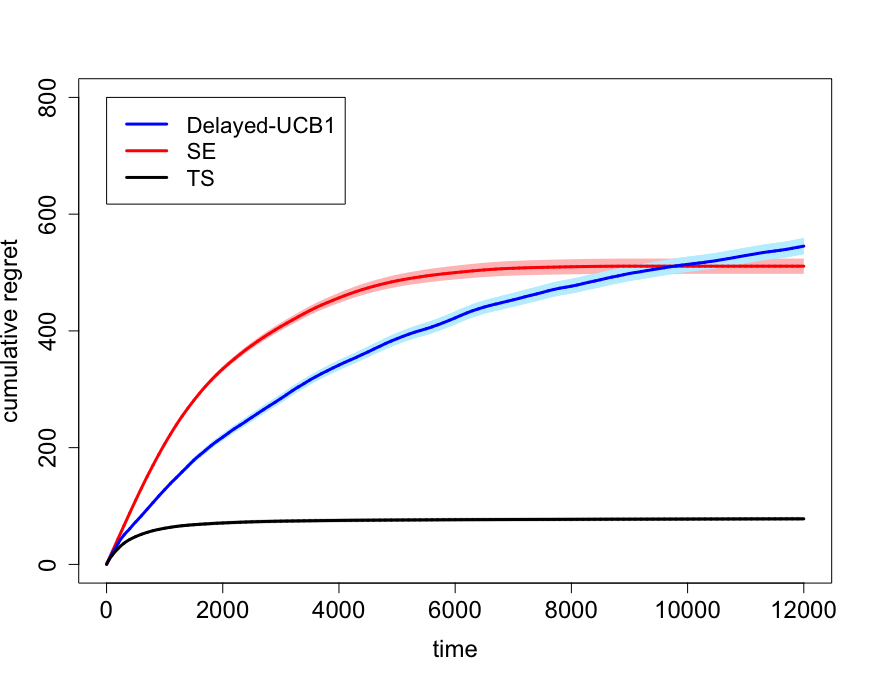}}
\caption{Regret of UCB, SE, TS algorithms under queue-based delays. Results are averaged over 200 runs. Error bars are displayed via shading.}
\label{fig:queue_delay_regret}
\end{center}
\end{figure}

\section{Discussion}
In this paper we presented a regret analysis of Thompson Sampling under i.i.d unrestricted delays which could have potentially unbounded expectations. The regret bound is of the same order as the UCB-based algorithms proposed in the literature. In addition to theoretical findings, we present a comprehensive empirical study of existing methods under various delay distributions, including unbounded ones with infinite expectations, and further consider a non i.i.d delay mechanism based on queues. In all cases, we find Thompson sampling to be a robust and performant algorithm that does not require any problem-dependent tuning. 

In future work, it would be of considerable interest to study the behavior of Thompson Sampling beyond the reward-independent i.i.d setting.
One example is to analyze queue-based delays we introduced in Section \ref{sec: non-iid}, where we have already empirically established that
Thompson Sampling is quite robust. Another example is to consider a reward-dependent setting
where the delay and the reward are sampled jointly from a distribution that allows for dependence (e.g., where small rewards come with longer delays).
In our experiments, we have found some reward-dependent settings where applying Algorithm \ref{algo:TS_delay} directly results in a large regret;
and it would be interesting to investigate variants of Thompson Sampling that may achieve better performance here.

\bibliographystyle{plainnat}
\bibliography{references}

\newpage
\appendix
\section{Proof of Lemma \ref{lemm:concentration}} \label{proof:lemm3.2}
\begin{proof}
We use the following lemma in \citet{MAB_unrestricted_delay}.
\begin{lemm}
At time $t$, for any arm $i$ and quantile $q \in (0,1]$, it holds that 
\[
\mathbb{P}\{n_{t+d_i(q)} < \frac{q}{2}m_t(i)\} \le \exp\left(-\frac{q}{8}m_t(i)\right)
\]
\end{lemm}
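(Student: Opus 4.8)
This restates a concentration fact used by \citet{MAB_unrestricted_delay}; the plan is to lower-bound the number of arm-$i$ observations available by round $t + d_i(q)$ by a sum of (conditionally) independent Bernoulli trials — one per pull of arm $i$ up to round $t$, each with success probability at least $q$ — and then apply a multiplicative Chernoff bound. Fix the arm $i$, let $s_1 < s_2 < \cdots$ be the rounds at which arm $i$ is pulled, and let $l^{(i)}_j$ be the delay attached to the $j$-th such pull (with the convention that $l^{(i)}_j$ is a fresh independent draw from $\mathcal D_i$ if arm $i$ is pulled fewer than $j$ times). Set $Z_j = \mathbf{1}\{l^{(i)}_j \le d_i(q)\}$. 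The key deterministic observation is that on $\{m_t(i) = m\}$ each of the first $m$ pulls of arm $i$ occurs at a round $s_j \le t$, so $l^{(i)}_j \le d_i(q)$ forces $s_j + l^{(i)}_j \le t + d_i(q)$, i.e.\ that observation is available by round $t + d_i(q)$; hence $n_{t+d_i(q)}(i) \ge \sum_{j=1}^{m} Z_j$ on $\{m_t(i) = m\}$, and in particular
\[
\{n_{t+d_i(q)}(i) < \tfrac{q}{2}m\}\cap\{m_t(i)=m\}\ \subseteq\ \Big\{\textstyle\sum_{j=1}^{m} Z_j < \tfrac{q}{2}m\Big\}.
\]

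Next I would pin down the law of $(Z_j)_{j\ge1}$. By Assumption~2 the delay of each pull of arm $i$ is drawn from $\mathcal D_i$ independently of all past rewards and delays, and (by the timing in Model~\ref{protocol:MAB_delay} and Algorithm~\ref{algo:TS_delay}) the action chosen in a round never depends on that round's not-yet-revealed delay. Hence, relative to the natural filtration generated by the pulls of arm $i$ and everything revealed before them, each $Z_j$ is conditionally Bernoulli with parameter $\mathbb P[l_t \le d_i(q)\mid A=i]\ge q$, the inequality being exactly the definition of the quantile $d_i(q)$ (the infimum is attained since delays are integer-valued). Consequently $\sum_{j=1}^{m}Z_j$ stochastically dominates a $\mathrm{Binomial}(m,q)$ variable, and the multiplicative Chernoff lower-tail bound (with deviation fraction $1/2$) gives
\[
\mathbb P\Big[\textstyle\sum_{j=1}^{m}Z_j < \tfrac{q}{2}m\Big]\ \le\ \mathbb P\big[\mathrm{Binomial}(m,q) < (1-\tfrac12)qm\big]\ \le\ \exp\!\big(-\tfrac{(1/2)^2 qm}{2}\big)\ =\ \exp\!\big(-\tfrac{qm}{8}\big).
\]
Combining this with the inclusion above yields $\mathbb P[\,n_{t+d_i(q)}(i) < \tfrac{q}{2}m \mid m_t(i)=m\,]\le e^{-qm/8}$ for every $m$, which is the asserted bound read for the realized value of $m_t(i)$ (and this is the form in which it is used when bounding $\mathbb P[F_1]$ via a union over $t$, $i$).

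The one genuine subtlety — and the step I would treat most carefully — is that $m_t(i)$ is random and statistically entangled with the delays of arm $i$ (a quickly returning reward can change whether arm $i$ is pulled again before $t$), so one must resist treating $n_{t+d_i(q)}(i)$ as a sum of a deterministic number of i.i.d.\ Bernoullis. The decomposition above sidesteps this: conditioning on $\{m_t(i)=m\}$, the bound $n_{t+d_i(q)}(i)\ge\sum_{j\le m}Z_j$ is a pointwise inequality, and the Chernoff step is applied to the whole sequence $(Z_j)_{j\le m}$ using only its conditional-Bernoulli property, which holds unconditionally because $m_t(i)$ is a stopping time for that filtration and each delay is drawn afresh, independently of the past, at the moment of its pull.
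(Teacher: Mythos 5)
The paper does not actually prove this lemma---it is imported verbatim from \citet{MAB_unrestricted_delay}---so there is no in-paper argument to compare against; your proof supplies the missing argument, and its skeleton (pointwise domination of $n_{t+d_i(q)}(i)$ by a sum of per-pull indicators $Z_j$, each Bernoulli with parameter at least $q$ by the definition of the quantile and Assumption~2, followed by a multiplicative Chernoff bound with deviation fraction $1/2$ giving the exponent $qm/8$) is exactly the standard route. The deterministic inclusion, the verification that $\mathbb{P}[l\le d_i(q)]\ge q$ is attained for integer-supported delays, and the Chernoff computation are all correct.

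The one step that does not go through as written is the final packaging as a conditional probability. Your inclusion gives
\[
\mathbb{P}\bigl[\{n_{t+d_i(q)}(i) < \tfrac{q}{2}m\}\cap\{m_t(i)=m\}\bigr] \;\le\; \mathbb{P}\bigl[\textstyle\sum_{j=1}^m Z_j < \tfrac{q}{2}m\bigr] \;\le\; e^{-qm/8},
\]
a bound on the \emph{joint} probability; passing to the conditional probability would require dividing by $\mathbb{P}[m_t(i)=m]$ and weakens the bound. The appeal to $m_t(i)$ being a stopping time does not rescue the conditional form: the event $\{m_t(i)=m\}$ is determined by the history up to round $t$, which includes which of arm $i$'s rewards have already arrived, so it is genuinely correlated with $(Z_1,\dots,Z_m)$ and conditioning on it can bias the delays. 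Fortunately the conditional form is not needed. The lemma as stated (with the random $m_t(i)$ on both sides of the inequality) is informal, and its use in bounding $\mathbb{P}[F_1]$ only requires, for each arm $i$ and each deterministic $m\ge 24\log(T)/q_i$, a bound on the probability that arm $i$ is pulled an $m$-th time at some round and fewer than $\tfrac{q}{2}m$ of its observations are available $d_i(q)$ rounds later---which is exactly your joint-probability bound, since $n_{t'+d_i(q)}(i)\ge\sum_{j\le m}Z_j$ for \emph{every} $t'$ with $m_{t'}(i)=m$ by monotonicity of $n$. A union over the at most $T$ values of $m$ and the $K$ arms then yields $\mathbb{P}[F_1]\le KT\cdot T^{-3}\le 1/T$. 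So: keep the joint form, drop the conditional claim, and the argument is complete.
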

We have 
\begin{align*}
    \mathbb{P}(F_1) &= \mathbb{P}\biggl\{ \exists t\le T,i: m_t(i) \geq \frac{24\log(T)}{q_i}, n_{t+d_i(q_i)}(i) < \frac{q_i}{2}m_t(i)\biggr\} \\
    &\le \sum_{i}\sum_{t:m_t(i) \geq \frac{24\log(T)}{q_i}} \mathbb{P}\left\{ n_{t+d_i(q_i)}(i) < \frac{q_i}{2}m_t(i) \right\} \\
    & \le \sum_{i}\sum_{t:m_t(i) \geq \frac{24\log(T)}{q_i}} \exp\left(-\frac{q_i}{8}m_t(i)\right) \\
    & \le TK \frac{1}{T^3}\\
    &\le \frac{1}{T}
\end{align*}
For the second part we have from Lemma 6 in \cite{pmlr-v23-agrawal12} that 
\[
\forall t, \mathbb{P}\left\{ \theta_2(t) > \mu_2 + \frac{\Delta}{2}, n_2(t) \ge \frac{24 \log T}{\Delta^2}\right\} \le \frac{2}{T^2},
\]
taking a union bound over $t$ gives us the result. 
\end{proof}

\section{Proof of Lemma \ref{lemm:bound_Yj}} \label{proof:lemm3.3}
\begin{proof}
\begin{align*}
    & \sum_{j=1}^{T} \EE{Y_j\cond G} \\
    & \le \sum_{j=1}^{T} \EE{\max\{X(n_{t_j}(1), s(n_{t_j}(1)), \mu_2 + \frac{\Delta}{2}), ...,  X(n_{t_{j+1}}(1), s(n_{t_{j+1}}(1)), \mu_2 + \frac{\Delta}{2})\}\cond G} \\
    & \le \sum_{j=1}^{T} \EE{\sum_{k=n_{t_j}}^{n_{t_{j+1}}} X(k, s(k), \mu_2 + \frac{\Delta}{2})} \\
    &\le \sum_{j=1}^{T}\EE{\min\{X(n_{t_j}(1), s(n_{t_j}(1)), \mu_2+\frac{\Delta}{2})), T\}}  + \sum_{k=1}^{T}\EE{\min\{X(k, s(k), \mu_2+\frac{\Delta}{2}), T\}}
\end{align*}
The first inequality holds because the fact that posterior distribution of arm 1 changes according to how many rewards have arrived and the time until the sample exceeds a threshold is bounded by the maximum of using fixed posterior. The second inequality holds because the maximum is bounded by the sum. The third inequality holds because each index $k$ enters exactly once except the starting index $n_{t_j}$ which could have repetitions. 
\end{proof}

\section{Proof of Theorem \ref{thm:n-arm}} \label{proof:thm3.6}

\begin{proof}
As in \cite{pmlr-v23-agrawal12}, we define arm $i\ne 1$ to be saturated if the number of observed rewards is at least $\frac{24 \log T}{\Delta_i^2}$ and let the set of saturated arms at time $t$ be $C(t)$. Define the following two events:
\[
F_1 = \left\{ \exists t,i: m_t(i) \geq \frac{24\log(T)}{q_i}, 
    n_{t+d_i(q_i)}(i) < \frac{q_i}{2}m_t(i)\right\}
\]
\[
F_2 = \left\{\exists t \le T, i \in C(t): \theta_i(t) \notin [\mu_i - \frac{\Delta_i}{2}, \mu_i + \frac{\Delta_i}{2}]\right\}
\]
Then we have the following results adapting proofs from \citet{MAB_unrestricted_delay} and \citet{pmlr-v23-agrawal12} respectively as in the 2-arm case. 
\begin{lemm}
$\mathbb{P}(F_1) \leq \frac{1}{T}$ and $\mathbb{P}(F_2) \le \frac{4(K-1)}{T}$.
\end{lemm}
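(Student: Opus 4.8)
The plan is to establish the two bounds separately, in each case by a union-bound argument that mirrors verbatim the $K=2$ computation in the proof of Lemma~\ref{lemm:concentration}, with the only change being that the union now ranges over all $K$ arms.

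\textbf{Bounding $\mathbb{P}(F_1)$.} First I would invoke the concentration lemma of \citet{MAB_unrestricted_delay} quoted above: for any fixed arm $i$, time $t$, and quantile $q\in(0,1]$,
\[
\mathbb{P}\Bigl\{ n_{t+d_i(q)}(i) < \tfrac{q}{2} m_t(i) \Bigr\} \le \exp\Bigl(-\tfrac{q}{8} m_t(i)\Bigr).
\]
On the event defining $F_1$ one has $m_t(i) \ge 24\log(T)/q_i$, so each such term is at most $\exp(-3\log T)=T^{-3}$. Taking a union bound over the at most $T$ relevant time indices and the $K$ arms gives $\mathbb{P}(F_1) \le K T \cdot T^{-3} = K/T^2 \le 1/T$, where the last step uses $K\le T$ (which may be assumed without loss of generality, as otherwise the claimed regret bound is uninformative). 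This is exactly the $F_1$ estimate from the two-arm proof, now carried over all $K$ arms rather than just two.

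\textbf{Bounding $\mathbb{P}(F_2)$.} Here I would use the posterior-concentration estimate of \citet{pmlr-v23-agrawal12} (their Lemma~6) in its two-sided form: for any suboptimal arm $i\ne 1$ and any time $t$,
\[
\mathbb{P}\Bigl\{ \theta_i(t) \notin \bigl[\mu_i - \tfrac{\Delta_i}{2},\, \mu_i + \tfrac{\Delta_i}{2}\bigr],\ n_t(i) \ge \tfrac{24\log T}{\Delta_i^2} \Bigr\} \le \tfrac{4}{T^2},
\]
obtained by adding the upper- and lower-deviation bounds, each of size $2/T^2$ (the lower bound being trivial when $\mu_i-\Delta_i/2<0$). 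The key point is that membership $i\in C(t)$ is precisely the event $n_t(i)\ge 24\log T/\Delta_i^2$, so even though $C(t)$ is a random set it suffices to union bound over all $K-1$ suboptimal arms and all $t\le T$: $\mathbb{P}(F_2) \le \sum_{t=1}^{T}\sum_{i=2}^{K} 4/T^2 = 4(K-1)/T$.

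\textbf{Main obstacle.} The delays never enter the posterior-concentration step, and I expect the only genuine point to verify is that Lemma~6 of \citet{pmlr-v23-agrawal12} still applies in the delayed setting. It does, because delays are i.i.d.\ and independent of rewards (Assumption~2): conditionally on which pulls of arm $i$ have been revealed by time $t$, the revealed rewards are still i.i.d.\ $\mathrm{Bernoulli}(\mu_i)$, so the empirical mean driving the $\mathrm{Beta}$ posterior concentrates around $\mu_i$ exactly as in the delay-free analysis, and the union-over-number-of-observations structure internal to Lemma~6 absorbs the adaptivity of $n_t(i)$. Once this is observed, both bounds reduce to the same counting already performed for $K=2$.
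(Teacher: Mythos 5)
Your proof is correct and follows essentially the same route the paper intends: the paper gives no separate argument for this lemma beyond "adapting proofs from \citet{MAB_unrestricted_delay} and \citet{pmlr-v23-agrawal12} as in the 2-arm case," and your union bounds over all $K$ arms and $T$ rounds reproduce exactly the computation in the paper's Appendix A, including the implicit $K \le T$ step in the $F_1$ bound and the factor of $4$ from the two-sided deviation in $F_2$. Your added remarks on why Lemma~6 of \citet{pmlr-v23-agrawal12} survives the delayed setting and why the randomness of $C(t)$ is harmless are correct and, if anything, more explicit than the paper.
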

Let $G = \neg F_1 \bigcap \neg F_2$. Similar to the two arm case we focus on the clean event $G$. As in 2-arm case we only need to bound the regret condition on $G$. We ignore condition on $G$ for simplicty in the following proof. Define 
\[
\tau_i = \inf\{t : n_t(i) \geq 24 \Delta_i^{-2} \log(T)\}
\]
which means the first time arm $i$ is in the saturated set. We bound the regret in terms of playing saturated arms and non-saturated arms. To bound the regret due to non-saturated arms, note that condition $G$, we have 
\begin{lemm}
$m_{\tau_i}(i) \leq \frac{48\log T}{q_i\Delta_i^2} + d_i(q_i)$.
\end{lemm}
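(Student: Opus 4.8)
The plan is to reproduce, arm by arm, the argument already used to establish \eqref{eq:goal1} in the two-arm case; as there, only the $\neg F_1$ half of the clean event $G$ is needed. Fix an arm $i \neq 1$. Since arm $i$ is pulled at most once per round, it is pulled at most $d_i(q_i)$ times during the $d_i(q_i)$ rounds in $(\tau_i - d_i(q_i),\, \tau_i]$, so
\[
m_{\tau_i}(i) \;\leq\; m_{\tau_i - d_i(q_i)}(i) + d_i(q_i)
\]
(reading $m_t(i) = 0$ for $t \leq 0$, which already handles the case $\tau_i \leq d_i(q_i)$). Hence it suffices to show $m_{\tau_i - d_i(q_i)}(i) \leq \frac{48 \log T}{q_i \Delta_i^2}$.

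I would split into two cases. If $m_{\tau_i - d_i(q_i)}(i) < \frac{24 \log T}{q_i}$, then, because $\Delta_i \in (0,1]$,
\[
m_{\tau_i - d_i(q_i)}(i) \;<\; \frac{24\log T}{q_i} \;\leq\; \frac{48 \log T}{q_i \Delta_i^2},
\]
and we are done. Otherwise $m_{\tau_i - d_i(q_i)}(i) \geq \frac{24 \log T}{q_i}$, which is precisely the trigger in the definition of $F_1$ at time $t = \tau_i - d_i(q_i)$; on $\neg F_1$ this forces
\[
n_{\tau_i}(i) \;=\; n_{(\tau_i - d_i(q_i)) + d_i(q_i)}(i) \;\geq\; \tfrac{q_i}{2}\, m_{\tau_i - d_i(q_i)}(i).
\]
Since $\tau_i$ is the first time $n_t(i)$ reaches $24 \Delta_i^{-2}\log T$, the left-hand side is (essentially) at most $24 \Delta_i^{-2}\log T$, and combining the two displays gives $m_{\tau_i - d_i(q_i)}(i) \leq \frac{48 \log T}{q_i \Delta_i^2}$.

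The only delicate point --- and where I expect whatever friction there is --- is the claim that $n_{\tau_i}(i)$ does not overshoot $24\Delta_i^{-2}\log T$: because delayed rewards can arrive in batches, $n_t(i)$ may increase by more than one from one round to the next. I would sidestep this by applying $\neg F_1$ one round earlier, at $t = \tau_i - 1 - d_i(q_i)$, and invoking $n_{\tau_i - 1}(i) < 24\Delta_i^{-2}\log T$, which holds by the definition of $\tau_i$ as an infimum; this costs only an additive constant in the final count, absorbed into the lower-order $O(1/\Delta_i + 1/\Delta_i^3)$ terms of Theorem \ref{thm:n-arm}. Apart from this bookkeeping, the argument is a verbatim transcription of the two-arm computation.
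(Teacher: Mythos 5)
Your argument is essentially the paper's own: split on whether $m_{\tau_i - d_i(q_i)}(i)$ exceeds $\frac{24\log T}{q_i}$, handle the small case via $\Delta_i \le 1$, and in the other case invoke $\neg F_1$ at $t = \tau_i - d_i(q_i)$ together with the minimality of $\tau_i$ to get $m_{\tau_i - d_i(q_i)}(i) \le \frac{48\log T}{q_i\Delta_i^2}$, then add the at-most-$d_i(q_i)$ extra pulls. The overshoot issue you flag is real --- the paper's deduction silently uses $n_{\tau_i}(i) \le 24\Delta_i^{-2}\log T$, which can fail when delayed rewards arrive in batches --- and your fix of applying $\neg F_1$ one round earlier is sound, costing only the additive $+1$ that is absorbed into the lower-order terms.
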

\begin{proof}
If $m_{\tau_i-d_i(q_i)}(i) \leq \frac{24\log T}{q_i}$ then the conclusion obviously holds. If not since we are in $G$, we know 
\[
n_{\tau_i - d_i(q_i)+d_i(q_i)}(i) \geq \frac{q_i}{2}m_{\tau_i-d_i(q_i)}(i)
\]
which implies 
\[
m_{\tau_i-d_i(q_i)}(i) \leq \frac{48\log T}{q_i\Delta_i^2}.
\]
Hence, 
\[
m_{\tau_i}(i) \leq \frac{48\log T}{q_i\Delta_i^2} + d_i(q_i)
\]
\end{proof}
Hence the regret due to unsaturated arms is bounded by 
\[
\sum_{i \ne 1} \frac{48\log T}{q_i\Delta_i} + d_i(q_i)\Delta_i.
\]
To bound the regret due to playing saturated arms, we follow \cite{pmlr-v23-agrawal12} and incorporate delays into the arguments. Specifically let us use the notations they developed. Let $I_j$ denote the interval between (excluding) $t_j$ and $t_{j+1}$. Define the following event 
\[
M(t) = \{\theta_1(t) > \max_{i \in C(t)} (\mu_i + \frac{\Delta_i}{2})\}
\]
and assume $M(t)$ holds if $C(t)$ is empty. Note that under $G$ all saturated arms are concentrated so essentially $M(t)$ denotes a pull of unsaturated arm. Now let $\gamma_j = |\{t\in I_j: M(t) = 1\}|$ and let $I_j(l)$ denotes the sub-interval of $I_j$ between $(l-1)$-th and $l$-th occurrences of event $M(t)$ in $I_j$. Finally let 
\[
V_j^{l,a} = |\{t \in I_j(l): \mu_a = \max_{i \in C(t)}\mu_i\}|
\]
which segments the interval $I_j(l)$ by which saturated arm to pull. Let $\mathcal{R}^S(I_j)$ be the regret pulling saturated arms in interval $I_j$. We then have the following crucial lemma bounding regret due to playing saturated arms from \citet{pmlr-v23-agrawal12}.
\begin{lemm} \label{lemm:saturated}
$\sum_{j=0}^{T-1}\EE{\mathcal{R}^s(I_j)} \le \sum_{j=0}^{T-1}\EE{\sum_{\ell=1}^{\gamma_j+1} \sum_{a=2}^K 3\Delta_a V^{\ell,a}_j} + 4(K-1)$.
\end{lemm}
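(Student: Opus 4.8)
The plan is to follow the saturated/unsaturated decomposition of \citet{pmlr-v23-agrawal12}, with the only change being that ``saturation'' is measured through the delayed observation counts $n_t(\cdot)$ rather than the pull counts $m_t(\cdot)$. The regret from unsaturated arms has already been bounded, so here I only account for rounds in which a \emph{saturated} arm is played. The structural device is a nested partition of $\{1,\dots,T\}$: first by the rounds $t_1<t_2<\cdots$ at which arm~$1$ is pulled (the blocks $I_j$), then inside each $I_j$ by the successive rounds where $M(t)=\{\theta_1(t)>\max_{i\in C(t)}(\mu_i+\Delta_i/2)\}$ occurs (the $\gamma_j+1$ sub-blocks $I_j(\ell)$), and finally inside each $I_j(\ell)$ by the identity $a^*(t)=\argmax_{i\in C(t)}\mu_i$ of the currently best saturated arm (so $a^*(t)=a$ for exactly $V_j^{\ell,a}$ rounds of $I_j(\ell)$).

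First I would record two facts valid on the clean event $G=\neg F_1\cap\neg F_2$. (i) \emph{When $M(t)$ holds, no saturated arm is played}: on $\neg F_2$ every $i\in C(t)$ has $\theta_i(t)\le\mu_i+\Delta_i/2<\theta_1(t)$, so $a_t=\argmax_i\theta_i(t)\notin C(t)$; consequently each saturated-arm pull inside $I_j$ falls strictly inside some $I_j(\ell)$, giving $\mathcal{R}^S(I_j)=\sum_{\ell=1}^{\gamma_j+1}\mathcal{R}^S(I_j(\ell))$ with no double counting at the sub-block endpoints. (ii) \emph{A factor-$3$ comparison}: if at a round $t$ with $a^*(t)=a$ we play a saturated arm $a'\in C(t)$, then on $\neg F_2$ we have $\theta_{a'}(t)\le\mu_{a'}+\Delta_{a'}/2=\mu_1-\Delta_{a'}/2$ while $\theta_{a'}(t)\ge\theta_a(t)\ge\mu_a-\Delta_a/2=\mu_1-3\Delta_a/2$, so $\Delta_{a'}\le3\Delta_a$. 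Hence every saturated-arm pull occurring among the $V_j^{\ell,a}$ rounds of $I_j(\ell)$ on which $a$ is the best saturated arm costs at most $3\Delta_a$.

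Combining (i) and (ii) gives, almost surely on $G$, the bound $\mathcal{R}^S(I_j)\le\sum_{\ell=1}^{\gamma_j+1}\sum_{a=2}^K 3\Delta_a V_j^{\ell,a}$; taking expectations and summing over $j=0,\dots,T-1$ yields the displayed inequality restricted to $G$. To pass to the unconditional statement one adds the contribution of $\neg G$, where the saturated arms need not concentrate: bounding the per-round regret crudely by $\Delta_a\le1$ and invoking $\mathbb{P}(F_2)\le 4(K-1)/T$ from the preceding concentration lemma contributes the additive $4(K-1)$ (the analogous $F_1$ contribution being of lower order and absorbed elsewhere).

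I expect the main obstacle to be the bookkeeping of the nested partition rather than any single inequality. One must check that $C(t)$ is non-decreasing in $t$ (``once saturated, always saturated,'' which holds because $n_t(i)$ is non-decreasing even with $\infty$-valued delays), so that $a^*(t)$ and hence the sub-block decomposition are well behaved; that the rounds where $M(t)$ holds carry no saturated-arm regret, so the blocks $I_j(\ell)$ tile $I_j$ without overlap; and that the $V_j^{\ell,a}$ genuinely partition each $I_j(\ell)$. The real content is verifying that delays enter only through the still-monotone counts $n_t(\cdot)$ and therefore leave the \citet{pmlr-v23-agrawal12} argument structurally intact; the factor-$3$ step and the crude $\neg G$ estimate are routine.
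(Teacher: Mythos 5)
Your proposal is correct and follows essentially the same route as the source of this lemma: the paper does not prove it but imports it directly from \citet{pmlr-v23-agrawal12}, and your argument is a faithful reconstruction of that proof---the nested partition by pulls of arm~1 and occurrences of $M(t)$, the observation that no saturated arm is played when $M(t)$ holds on $\neg F_2$, the factor-$3$ comparison $\Delta_{a'}\le 3\Delta_a$, and the $\mathbb{P}(F_2)\le 4(K-1)/T$ accounting for the additive $4(K-1)$. Your added check that saturation is monotone under delays (because $n_t(i)$ is non-decreasing even with $\infty$-valued delays) is exactly the point that makes the undelayed argument carry over, and is worth stating explicitly.
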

We then have 
\begin{align}
    &\sum_{j=0}^{T-1}\EE{\sum_{\ell=1}^{\gamma_j+1} \sum_{a=2}^K 3\Delta_a V^{\ell,a}_j} \notag \\
    =& \sum_{a=2}^K 3\Delta_a\EE{\sum_{j=0}^{T-1}\sum_{\ell=1}^{\gamma_j+1} V^{\ell,a}_j} \label{target}
\end{align}
Now as in non-delay case, by definition, $V_j^{l,a}$ is the number of of steps in $I_j(l)$ for which $a$ is the best arm in saturated set and $M(t)$ does not hold. This is the steps until our Beta posterior of arm 1 has a sample exceeding $\mu_a + \frac{\Delta_a}{2}$ or an arm different than $a$ becomes the best or we reach end of round $T$. Hence this is stochastically dominated by steps until the Beta posterior sample exceeding $\mu_a + \frac{\Delta_a}{2}$. Unlike non-delayed case, the posterior distribution is changing, but it could still be bounded as the following because $V_j^{\ell,a}$ is bounded by a sum of all $X$ terms using all possible posterior distributions, i.e. 
\[
V_j^{\ell,a} \leq \sum_{k=n_{t_j}(1)}^{n_{t_{j+1}}(1)} X(k, s(k), \mu_a+\frac{\Delta_a}{2})
\]
Hence, we have 
\begin{align*}
    \eqref{target} \le & \sum_{a=2}^K 3\Delta_a\EE{\sum_{j=1}^{T}(\gamma_j+1) \sum_{k=n_{t_j}(1)}^{n_{t_{j+1}}(1)} X(k, s(k), \mu_a+\frac{\Delta_a}{2})} \\
    \le& \sum_{a=2}^K 3\Delta_a\EE{\sum_{j=1}^{T}\sum_{k=n_{t_j}(1)}^{n_{t_{j+1}}(1)} X(k, s(k), \mu_a+\frac{\Delta_a}{2})} \\
    &+ \sum_{a=2}^K 3\Delta_a\EE{\sum_{j=1}^{T}\gamma_j\max_j\sum_{k=n_{t_j}(1)}^{n_{t_{j+1}}(1)} X(k, s(k), \mu_a+\frac{\Delta_a}{2})} 
\end{align*}
Note that $\sum_{j=1}^{T}\gamma_j$ is bounded by the total number of pulls of unsaturated arm, which is $\sum_{i=2}^{K}\frac{48\log T}{q_i\Delta_i^2} + d_i(q_i)$. Also note that the maximum term is bounded by the sum over all $k$, hence the above is further bounded by 
\begin{align*}
    \le& \sum_{a=2}^K 3\Delta_a\EE{\sum_{j=1}^{T}\sum_{k=n_{t_j}(1)}^{n_{t_{j+1}(1)}} X(k, s(k), \mu_a+\frac{\Delta_a}{2})} \\
    &+ \left(\sum_{i=2}^{K}\frac{48\log T}{q_i\Delta_i^2} + d_i(q_i)\right) \sum_{a=2}^K 3\Delta_a\EE{\sum_{k=0}^T X(k, s(k), \mu_a+\frac{\Delta_a}{2})} 
\end{align*}
By the proof of lemma \ref{lemm:bound_Yj} the first term is bounded by 
\begin{equation*}
    \sum_{a=2}^K 3\Delta_a\left(\sum_{j=1}^{T}\EE{\min\{X(n_{t_j}(1), s(n_{t_j}(1)), \mu_a+\frac{\Delta_a}{2})), T\}}
+ \sum_{k=1}^{T}\EE{\min\{X(k, s(k), \mu_a+\frac{\Delta_a}{2}), T\}}\right)
\end{equation*}
which is further bounded by the following using the result from 2-arm case
\begin{equation*}
    \sum_{a=2}^{K}O\left(\frac{1}{\Delta_a} + \frac{1}{\Delta_a^3}\right) 
    + \left(\frac{32\log T}{q_1\Delta_a} + d_1(q_1)\Delta_a+\Delta_a \right)\frac{6}{\Delta_a}
\end{equation*}
The second term according to corollary \ref{coro:sum_Xk} is bounded by 
\begin{equation*}
    \left(\sum_{i=2}^{K}\frac{48\log T}{q_i\Delta_i^2} + d_i(q_i)\right) \sum_{a=2}^{K}O\left(\frac{1}{\Delta_a} + \frac{1}{\Delta_a^3}\right) 
\end{equation*}
Combining all terms gives the resulting bound. 
\end{proof}
\end{document}